\def\Rbb{\mathbb{R}}
\def\R{\Rbb}
\def\*{\star}
\newcommand{\tr}[1]{ \mathrm{tr}\left( #1\right)}
\renewcommand{\tr}{{T}}
\newcommand{\paral}{{\!/\mkern-5mu/\!}}
\newcommand{\lm}{\bm{\lambda}}
\newcommand{\Lag}{\mathcal{L}}
\newcommand{\Ham}{\mathcal{H}}
\newcommand{\p}{\mathbf{p}}
\newcommand{\q}{\mathbf{q}}
\newcommand{\qd}{{\dot{\q}}}
\newcommand{\qdd}{{\ddot{\q}}}
\newcommand{\vv}{\mathbf{v}}
\newcommand{\x}{\mathbf{x}}
\newcommand{\xd}{{\dot{\x}}}
\newcommand{\xdd}{{\ddot{\x}}}
\newcommand{\y}{\mathbf{y}}
\newcommand{\z}{\mathbf{z}}
\newcommand{\f}{\mathbf{f}}
\newcommand{\h}{\mathbf{h}}
\newcommand{\g}{\mathbf{g}}
\newcommand{\mc}{\mathbf{c}}
\newcommand{\zero}{\mathbf{0}}
\newcommand{\J}{\mathbf{J}}
\newcommand{\Jd}{{\dot{\J}}}
\newcommand{\B}{\mathbf{B}}
\newcommand{\C}{\mathbf{C}}
\newcommand{\G}{\mathbf{G}}
\newcommand{\I}{\mathbf{I}}
\newcommand{\M}{\mathbf{M}}
\newcommand{\mP}{\mathbf{P}}
\newcommand{\mR}{\mathbf{R}}
\newcommand{\wt}[1]{{\widetilde{#1}}}
\newcommand{\wh}[1]{{\widehat{#1}}}
\theoremstyle{plain}
\newtheorem{theorem}{Theorem}[section]
\newtheorem{lemma}[theorem]{Lemma}
\theoremstyle{definition}
\newtheorem{definition}[theorem]{Definition}
\theoremstyle{remark}
\let\save@mathaccent\mathaccent
\newcommand*\if@single[3]{%
  \setbox0\hbox{${\mathaccent"0362{#1}}^H$}%
  \setbox2\hbox{${\mathaccent"0362{\kern0pt#1}}^H$}%
  \ifdim\ht0=\ht2 #3\else #2\fi
  }
\newcommand*\rel@kern[1]{\kern#1\dimexpr\macc@kerna}
\newcommand*\widebar[1]{\@ifnextchar^{{\wide@bar{#1}{0}}}{\wide@bar{#1}{1}}}
\newcommand*\wide@bar[2]{\if@single{#1}{\wide@bar@{#1}{#2}{1}}{\wide@bar@{#1}{#2}{2}}}
\newcommand*\wide@bar@[3]{%
  \begingroup
  \def\mathaccent##1##2{%
%Enable nesting of accents:
    \let\mathaccent\save@mathaccent
%If there's more than a single symbol, use the first character instead (see below):
    \if#32 \let\macc@nucleus\first@char \fi
%Determine the italic correction:
    \setbox\z@\hbox{$\macc@style{\macc@nucleus}_{}$}%
    \setbox\tw@\hbox{$\macc@style{\macc@nucleus}{}_{}$}%
    \dimen@\wd\tw@
    \advance\dimen@-\wd\z@
%Now \dimen@ is the italic correction of the symbol.
    \divide\dimen@ 3
    \@tempdima\wd\tw@
    \advance\@tempdima-\scriptspace
%Now \@tempdima is the width of the symbol.
    \divide\@tempdima 10
    \advance\dimen@-\@tempdima
%Now \dimen@ = (italic correction / 3) - (Breite / 10)
    \ifdim\dimen@>\z@ \dimen@0pt\fi
%The bar will be shortened in the case \dimen@<0 !
    \rel@kern{0.6}\kern-\dimen@
    \if#31
      \overline{\rel@kern{-0.6}\kern\dimen@\macc@nucleus\rel@kern{0.4}\kern\dimen@}%
      \advance\dimen@0.4\dimexpr\macc@kerna
%Place the combined final kern (-\dimen@) if it is >0 or if a superscript follows:
      \let\final@kern#2%
      \ifdim\dimen@<\z@ \let\final@kern1\fi
      \if\final@kern1 \kern-\dimen@\fi
    \else
      \overline{\rel@kern{-0.6}\kern\dimen@#1}%
    \fi
  }%
  \macc@depth\@ne
  \let\math@bgroup\@empty \let\math@egroup\macc@set@skewchar
  \mathsurround\z@ \frozen@everymath{\mathgroup\macc@group\relax}%
  \macc@set@skewchar\relax
  \let\mathaccentV\macc@nested@a
%The following initialises \macc@kerna and calls \mathaccent:
  \if#31
    \macc@nested@a\relax111{#1}%
  \else
%If the argument consists of more than one symbol, and if the first token is
%a letter, use that letter for the computations:
    \def\gobble@till@marker##1\endmarker{}%
    \futurelet\first@char\gobble@till@marker#1\endmarker
    \ifcat\noexpand\first@char A\else
      \def\first@char{}%
    \fi
    \macc@nested@a\relax111{\first@char}%
  \fi
  \endgroup
}
\newcommand{\fullversiononly}{\iftrue}
\newcommand{\compressedversiononly}{\iffalse}
\newcommand{\apx}{Appendix}
\newcommand{\apx}{\cite{Wyk2021GeometriFabricsPhysicsArXivAbbrev} Appendix}
\title{Geometric Fabrics: Generalizing Classical Mechanics to Capture the Physics of Behavior
}
\author{Karl Van Wyk$^{1*}$, Mandy Xie$^{1,2*}$, Anqi Li$^{1,3}$, Muhammad Asif Rana$^{1,2}$,
Buck Babich$^{1}$, Bryan Peele$^{1}$, Qian Wan$^{1}$, \\
Iretiayo Akinola$^{1}$, Balakumar Sundaralingam$^{1}$, 
Dieter Fox$^{1,3}$, Byron Boots$^{1,3}$, and Nathan D. Ratliff$^{1}$ \\
{\small
*Equal contrib,
$^{1}$NVIDIA, 
$^{2}$Georgia Tech, 
$^{3}$University of Washington.}
}
\author{Karl Van Wyk$^{1*}$, Mandy Xie$^{1,2*}$, Anqi Li$^{1,3}$, Muhammad Asif Rana$^{1,2}$,
Buck Babich$^{1}$, Bryan Peele$^{1}$, Qian Wan$^{1}$, \\
Iretiayo Akinola$^{1}$, Balakumar Sundaralingam$^{1}$, 
Dieter Fox$^{1,3}$, Byron Boots$^{1,3}$, and Nathan D. Ratliff$^{1}$
\thanks{Manuscript received: September, 9, 2021; Revised December, 1, 2021; Accepted December, 27, 2021.}
\thanks{This paper was recommended for publication by Editor Lucia Pallottino upon evaluation of the Associate Editor and Reviewers' comments.}
\thanks{$^{1}$Karl Van Wyk, Buck Babich, Bryan Peele, Qian Wan, Iretiayo Akinola, Balakumar Sundaralingam, Dieter Fox, Byron Boots, and Nathan Ratliff are with Nvidia, USA
        {\tt\footnotesize kvanwyk@nvidia.com}}
\thanks{$^{2}$ Mandy Xie and Muhammad Asif Rana are with the School of Computing, Georgia Institute of Technology, USA
        {\tt\footnotesize manxie@gatech.edu}}
        
\thanks{$^{3}$ Anqi Li, Dieter Fox, and Byron Boots are with the School of Computer Science and Engineering, University of Washington, USA 
        {\tt\footnotesize anqil4@cs.washington.edu}}

\thanks{Digital Object Identifier (DOI): see top of this page.}
}
\begin{document}

\maketitle
%\thispagestyle{empty}
%\pagestyle{empty}

%%%%%%%%%%%%%%%%%%%%%%%%%%%%%%%%%%%%%%%%%%%%%%%%%%%%%%%%%%%%%%%%%%%%%%%%%%%%%%%%
\begin{abstract}
Classical mechanical systems are central to controller design in energy shaping methods of geometric control.
However, their expressivity is limited by position-only metrics and the intimate link between metric and geometry.
Recent work on Riemannian Motion Policies (RMPs) has shown that shedding these restrictions results in powerful design tools, but at the expense of theoretical stability guarantees. In this work, we generalize classical mechanics to what we call geometric fabrics, whose expressivity and theory enable the design of systems that outperform RMPs in practice. Geometric fabrics strictly generalize classical mechanics forming a new physics of behavior by first generalizing them to Finsler geometries and then explicitly bending them to shape their behavior while maintaining stability.
We develop the theory of fabrics and present both a collection of controlled experiments examining their theoretical properties and a set of robot system experiments showing improved performance over a well-engineered and hardened implementation of RMPs, our current state-of-the-art in controller design.

\begin{IEEEkeywords}
Dynamics, Optimization and Optimal Control
\end{IEEEkeywords}
\end{abstract}

\section{Introduction}

\IEEEPARstart{C}{lassical} mechanical systems \cite{ClassicalMechanicsTaylor05} are the workhorse of controller design in geometric control\footnote{Geometric control encompasses operational space control \cite{khatib1987unified} and potential field methods \cite{KhatibPotentialFields1985}; we refer to them all here as geometric control.} \cite{bullo2004geometric}. 
Their popularity derives from both powerful conservation properties \cite{LeonardSusskindClassicalMechanics} lending to simply Lyapunov stability and convergence analysis \cite{khalil2002nonlinear} and their link to Riemannian geometry (see geometric mechanics \cite{bullo2004geometric,ratliff2021geometry}), enabling the shaping of, for instance, straight-line end-effector behavior. However, classical mechanical systems have fundamentally limited expressivity.

When designing behavior in parts---such as combining target reaching with obstacle avoidance, joint limit avoidance, posturing, and speed regulation---these classical mechanical systems combine as metric-weighted averages of component policies (see Section~\ref{sec:ShortcomingsOfClassicalMechanics}). Three limitations are clear from this perspective: 
(1) priority metrics depend only on position and not full system state; (2) the geometry is fundamentally linked to the metric making it impossible to design both independently; and (3) controllers, as a result, overly rely on potentials, dampers and unnatural task hierarchies.
Recent work in Riemannian Motion Policies (RMPs) \cite{cheng2018rmpflow}, 
%\cite{cheng2021rmpflow} 
demonstrated that removing the restrictions of classical mechanics can result in powerful design tools. However, RMPs sacrifice theoritcal insight for expressivity, forcing practitioners to gain experience before becoming proficient. 
%\footnote{Note that this original work used the term {\em Riemannian} in a less rigorous sense than our usage here. It is better understood to mean simply {\em spectral} in the sense of {\em spectral semi-spray (spec)} defined in \apx~\ref{apx:DerivationsOfFabricComponentAlgebra}.}

{\em Geometric fabrics}, a new physics of behavior, generalize classical mechanics while retaining the powerful mathematics of classical systems and promoting flexibility to circumvent the above issues. Geometric fabrics are what we call {\em bent} Finsler geometries \cite{ratliff2021geometry}---the Finsler system generalizes the Riemannian geometry of classical systems giving us velocity-dependent metrics, and the bending terms enable the independent shaping of associated policies. 
Constraint forces are simple examples of bending terms commonly deployed to follow trajectories or trace surfaces \cite{Peters_AR_2008}, but our bending terms extend far beyond constraints. We derive fabrics that bend Finsler geometries to match any given nonlinear geometry using a technique called {\em geometry energization}.
%\footnote{The powerful properties of classical mechanical systems are in the closed system. External torques can be used to implement any desired policy (which is how we control the physical system to follow a given geometric fabric), so systems with arbitrary external forces lose any interesting mathematical structure. When discussing and comparing classes of systems, we always refer to {\em closed} systems; all external forcing terms (such as potential forces and damping) are explicitly modeled with concrete structure.}

One may view fabrics as a class of systems that instantaneously look like classical systems (with mass matrices, corresponding inertias, and forces), but which change from moment to moment as a function of state, similar to variable impedance systems; even concepts such as contact models generalize. In most cases, techniques for classical systems apply to fabrics, but now with the expressivity of RMPs.

Concretely, this work: (1) generalizes classical mechanics to capture the flexibility required for behavioral design; (2) develops an expressive class of RMPs that are stable, path consistent, and agnostic to parameterization (covariant); (3) develops a fabric component algebra analogous to the RMP algebra; and (4) presents experiments validating the theory and showing we can exploit the theoretical properties to out-perform a well-tuned implementation of RMPflow.

\fullversiononly
We start by reviewing background in Section~\ref{sec:Background} and discuss the limitations of classical systems. 
We then define geometric fabrics in Section~\ref{sec:GeometricFabrics}, show that constrained fabrics are themselves fabrics, and present a simple stability and convergence theorem. 
Subsection~\ref{sec:EnergizingHd2Geometries} presents energization, showing that fabrics are in a sense complete and can take on the shape of any geometry. Finally, we discuss computation in generalized coordinates in Subsection~\ref{sec:GeneralizedCoordinates} and use those results to define an algebra of {\em fabric components} in Section~\ref{sec:DesigningFabrics} which can be used to derive computational algorithms.
Section~\ref{sec:ControlledParticleExperiments} then analyzes empirically the advantages of geometric fabrics over standard unbent Riemannian and Finsler systems, and Section~\ref{sec:SystemExperiments} studies a full robot system implementation of fabrics, comparing it to a well-engineered and hardened implementation of RMPs.

The appendix and our website\footnote{\url{https://sites.google.com/nvidia.com/geometric-fabrics-behavior}} contains extended discussion, proofs, and additional experimental demonstrations.
\fi

% \fullversiononly
% Additional experimental results, including an exhibition on a cubby navigation task demonstrating encoding heuristics into geometries can be found on the website.\footnote{\url{https://sites.google.com/nvidia.com/geometric-fabrics-behavior}}
% \fi

\section{Related Work}
\label{sec:RelatedWork}
Second-order systems are natural for control since robots, themselves, are second-order systems (see \cite{IjspeertDMPs2013} for similar arguments), so we restrict our attention here to those.

Geometric control \cite{bullo2004geometric,khatib1987unified,Peters_AR_2008}, which uses energy shaping of classical mechanical systems for behavioral design, is the most directly related area.
We discuss the limitations of classical systems in Section~\ref{sec:ShortcomingsOfClassicalMechanics} and propose geometric fabrics to eliminate them. Energy shaping methods can leverage this broader class of systems for more expressive design.
Dynamic Movement Primitives (DMPs) \cite{IjspeertDMPs2013} are another closely related area. DMPs operate by perturbing stable systems with a forcing term that either vanishes or is periodic---the resulting system inherits its stability from the underlying stable system. This idea is again highly-compatible with geometric fabrics since we can swap out the underlying stable system with any stable second-order system, including geometric fabrics (see \apx~\ref{sec:GeneralizingDmps} for details). 
Optimal control \cite{OptimalControlEstimationStengel94} is another relevant area and again highly compatible. Any optimal controller optimizes over a dynamics function. Traditionally, that dynamics function is derived from classical mechanics, but fabrics are a viable generalized alternative that would enable building behavior into the dynamics to aid the optimization. 

An alternative approach to fabrics, developed independent of this work, is the Pullback Bundle Dynamical System (PBDS), derived as a class of stable Riemannian systems on the tangent bundle \cite{BylardBonalliEtAl2021}. 
PBDSs separate priority metric and policy design but in turn require task policies to be srictly Riemannian, a subset of the HD2 geometries of fabrics. Contact dynamics is also less clear, although such extensions are likely possible.
Another related area is Control Lyapunov Functions (CLF) \cite{Li2019LyapunovRmps} which constitute a general technique for stabilizing systems. CLFs work by projecting unstable systems onto a Lyapunov stable class introducing potentially large projection errors. With fabrics we directly design inherently stable nonlinear systems so the final system both captures the desired behavior and is stable by construction.
Finally, earlier work introduced Geometric Dynamical Systems (GDS) \cite{cheng2018rmpflow} as a stable class of RMPs. We now know that GDSs are better expressed as Finsler systems which are unbent fabrics. \apx~\ref{apx:Gds} details this relation and Subsection~\ref{sec:ControlledParticleExperiments} shows empirically their limitations.

Other control approaches, including $\lambda_0$-PMP models \cite{tommasino2017task}, are first-order differential IK models related to classical Riemannian systems but lack the second-order fidelity and velocity dependence we address here. Fractal impedance methods \cite{tiseo2021passive} define locally optimal controllers that asymptotically track simultaneous position and force profiles. However, these are specialized controllers unlike geometric fabrics which describe a very general class of second order dynamical systems without any explicit notion of trajectory tracking or specific objectives. Moreover, composable energy policies \cite{urain2021composable} optimize actions over a product of stochastic policies to resolve multiple policy conflicts. Geometric fabrics address this issue as well through geometric policies. Alternative works try to further resolve multi-policy conflicts with hierarchical prioritization and null-space projections \cite{sentis2006whole, khatib1987unified,dietrich2018HierarchicalOpSpaceControl}. However, these structures are overly strict and relative priorities should instead continuously adjust.

\fullversiononly
Finally, Finsler and spray geometry in mathematics \cite{bucataru2007FinslerLagrangeDynamics} are, of course, highly relevant. That literature, however, remains insurmountably challenging for most roboticists. We work from the re-derivations presented in \cite{ratliff2021geometry}; it's an area of future work to lower the barrier-to-entry and leverage related ideas from the literature. To the best of our knowledge geometric fabrics and results on constrained fabrics and energization are novel, as are the fabric component algebra and resulting tools for putting them into practice.
\fi

\section{Background}
\label{sec:Background}
\subsection{Finsler and general nonlinear geometries}
\label{sec:FinslerAndHd2Geometries}

We briefly review some of the main ideas behind Finsler and general nonlinear geometries of paths (Homogeneous of Degree 2 (HD2) geometries)---see \cite{ratliff2021geometry} for details. We call any smooth differential equation $\qdd + \h(\q, \qd) = \zero$ an HD2 geometry when $\h$ is HD2 in velocities so that $\h(\q, \alpha\qd) = \alpha^2\h(\q, \qd)$. We often write it in resolved form $\qdd = -\h(\q, \qd) = \pi(\q, \qd)$ and refer to $\pi(\q, \qd)$ as a {\em geometric policy} to give policy semantics. The HD2 property enforces that integral curves are \textit{path consistent}, i.e., they follow geometric speed-independent paths \cite{ratliff2021geometry}. 
{\em Finsler geometries} are HD2 geometries defined by their minimum length geodesics using the calculus of variations. Their generalized arc-length measures are defined by a Finsler structure $\Lag_g(\q,\qd)$:
%\footnote{Note that \cite{ratliff2021geometry} characterizes an important distinction between the geometric equation and the geometry generator. In this work, what we call HD2 geometries (and geometric policies) are geometry generators. They uniquely define an underlying geometry of paths.}

\begin{definition} \label{def:FinslerStructure}
A Finsler structure is a Lagrangian with the following three properties:
\begin{enumerate}
    \item 
    $\Lag_g(\q,\qd)\geq 0$ 
    with equality if and only if $\qd=\zero$.
    \item $\Lag_g$ is positive homogeneous (HD1) in $\qd$ so that $\Lag_g(\q,\alpha\qd) = \alpha\Lag_g(\q, \qd)$ for $\alpha \geq 0$.
    \item $\partial^2_{\qd\qd}\Lag_e\succeq 0$ when $\qd\neq\zero$ where $\Lag_e = \frac{1}{2}\Lag_g^2$.
\end{enumerate}
\end{definition}
The first two properties enforce that $\Lag_g(\q, \qd)$ is a speed-independent measure of length (see \cite{ratliff2021geometry}). The third property ensures the equations of motion of the associated {\em Finsler energy} $\Lag_e$, derived from the Euler-Lagrange equation as $\frac{d}{dt}\partial_\qd\Lag_e - \partial_\q\Lag_e = \M(\q,\qd)\qdd + \bm{\xi}(\q, \qd) = \zero$, with $\M = \partial_{\qd\qd}^2\Lag_e$ (mass / metric) and $\bm{\xi} = \partial_{\qd\q}\Lag_e\qd - \partial_\q\Lag_e$ (geometric terms), are well-formed with positive definite mass. These (closed) equations of motion (absent potential, damping, or external forces) define an HD2 (Finsler) geometry \cite{ratliff2021geometry}.

A simple example of a Finsler geometry is Riemannian geometry. The Riemannian Finsler structure measures length $\Lag_g(\q, \qd) = \big(\qd^T\M(\q)\qd\big)^{\frac{1}{2}} = \|\qd\|_\M$ using a symmetric positive definite metric $\M(\q)$ dependent on position only. The corresponding energy is the well-known classical kinetic energy $\Lag_e = \frac{1}{2}\Lag_g^2 = \frac{1}{2}\|\qd\|_\M^2 = \frac{1}{2}\qd^T\M(\q)\qd$ for mass $\M$. This link between geometry and system energy is reflected more generally in the link between Finsler geometry and Finsler energy, with equations of motion tracing geodesics.

Every Lagrangian system has an associated conserved energy (its Hamiltonian) $\Ham = \partial_\qd\Lag^\tr\qd - \Lag = \p^\tr\qd - \Lag$ where $\p = \partial_\qd\Lag$ is the {\em generalized momentum}. When the Lagrangian is a Finsler energy $\Lag_e = \frac{1}{2}\Lag_g^2$ with equations of motion $\M(\q, \qd)\qdd + \bm{\xi}(\q,\qd) = \zero$, the Hamiltonian can be expressed $\Ham_e = \Lag_e = \frac{1}{2}\qd^\tr\M\qd$, showing that $\Lag_e$ itself is the energy and can be expressed in the classical form \cite{ratliff2021geometry}.

The Finsler metric $\M(\q, \qd)$ is Homogeneous of Degree 0 (HD0) which means $\M(\q, \alpha\qd) = \M(\q, \qd)$. $\M$ is, therefore, dependent on  velocity direction $\widehat{\qd}$ but not on speed. That means, for any system $\M(\q,\qd)\qdd + \f(\q,\qd) = \zero$ (where $\f$ may capture more than just $\bm{\xi}$), the resolved equation $\qdd = -\M^{-1}\f$ defines a geometric policy if and only if $\f$ is HD2.

\subsection{Energy Shaping with Classical Mechanical Systems}
\label{sec:ShortcomingsOfClassicalMechanics}

Classical mechanical systems have traditionally been important tools for energy shaping in controller design \cite{bullo2004geometric}. 
The main idea of energy shaping is to design a virtual classical mechanical system that captures a desired behavior, then control the physical system to behave like it thereby inheriting the stability of the virtual system. This idea is powerful which is why it has been so important and popular in control. But we show here that classical systems have limited expressivity as tools for behavioral design.

Let $\mathcal{Q}$ be a generalized coordinate domain constrained by $\C(\q)=\zero$ with Jacobian $\J(\q)$. 
A constrained classical mechanical system evolves according to the equation \cite{ClassicalMechanicsTaylor05}:
\begin{align} \nonumber
    \M(\q)\qdd + \bm{\xi}(\q, \qd) + \J(\q)^\tr\lm(\q,\qd) = -\partial_\q\psi(\q) - \B(\q,\qd)\qd,
\end{align}
where $\M(\q)\qdd + \bm{\xi}(\q, \qd) = \zero$ form the equations of motion of an unforced system with kinetic energy $\mathcal{K}(\q, \qd) = \frac{1}{2}\qd^\tr\M(\q)\qd$ (the term $\bm{\xi}(\q,\qd)$ captures fictitious forces which we call here {\em geometric forces} in reference to their role in Finsler geometry), the function $\psi(\q)$ is a potential function, $\B(\q,\qd)$ is a positive definite damping matrix, and $\lm(\q,\qd)$ are the Lagrange multipliers of the constraint force. With $\B=\zero$ the system conserves total energy $\mathcal{E}(\q,\qd) = \mathcal{K}(\q,\qd) + \psi(\q)$; with nonzero $\B$ the energy dissipates, and the system converges to a local minimum of the potential on the constrained domain and is therefore inherently stable.

Unfortunately, a simple analysis shows that these systems have limited expressivity. Consider multiple classical systems $\M_i(\q)\qdd + \f_i(\q,\qd) = \bm{\tau}_i$, where now $\f_i$ captures all forces excluding constraints (fictitious, potential, and damping---constraint forces are often added to the combined system after summing, so we ignore them here) and we single out $\bm{\tau}_i$ as the force the other systems place on system $i$. For instance, these component systems may express reaching a target, avoiding obstacles, joint limit avoidance, and posturing of the arm. The internal forces $\bm{\tau}_i$ must sum to zero $\sum_i\bm{\tau}_i = \zero$, so we have $\sum_i\big(\M_i\qdd + \f_i\big) = \sum_i\bm{\tau}_i = \zero$. Assuming each $\M_i$ is full rank, we can write $\pi_i = -\M_i^{-1}\f_i$ and express the resulting combined acceleration 
\fullversiononly
as\footnote{A similar result holds if $\M_i$ is not full rank. Intuitively, we can add a small $\epsilon$ to the diagonal then take the limit $\epsilon\rightarrow 0$ in Equation~\ref{eqn:ClassicalMetricWeightedAverage}.}
\else
as
\fi
\begin{align} \label{eqn:ClassicalMetricWeightedAverage}
    \qdd = \left(\sum_i\M_i\right)^{-1}\sum_i\M_i\pi_i,
\end{align}
which is simply a metric-weighted average of constituent policies $\pi_i$, where each policy takes the form:
\begin{align} \nonumber
    \pi_i(\q, \qd) = -\M_i^{-1}\Big(\bm{\xi}_i(\q,\qd) + \partial_\q\psi_i(\q) + \B_i(\q,\qd)\qd\Big).
\end{align}
Each component policy is defined by geometric forces, its potential function, and its damping matrix; they averaged together using the mass matrices as spectral priority weights (with Eigenvalues expressing different priorities along each Eigenvector). Three limitations manifest from this analysis.

First, priority metrics are functions of position only and not full position-velocity state. Many tasks are best expressed with velocity aware priorities which classical metrics cannot represent. E.g. any barrier avoidance task (obstacle, self, joint limit avoidance) should care about the barrier while moving toward it, but should forget about it when moving away.

Second, geometric forces can be powerful---they define the Riemannian geometry of the system associated with the metric $\M(\q)$ \cite{bullo2004geometric,ratliff2021geometry}---but in classical systems they are intrinsically linked to the metric which also plays the role of the policy's priority matrix. Only one of the two can be designed, the other becomes an artifact.
This coupling results in fighting between components (when the metric is compromised) or forces an over-reliance on potentials and dampers (when the geometry is compromised). An example of a Riemannian geometry working against the policy is the robot's own (unshaped) inertia which will fight a potential.
Likewise, straight-line end-effector motion is a useful shaped geometry, but that defines a largely irrelevant priority metric.

Third, when the geometric forces cannot be used, we must shape the behavior entirely using potential functions and dampers. Potential functions add to the total potential energy affecting where local minima fall resulting in task conflict. 
Moreover, potentials are only position-dependent and can induce spring-like oscillations when damping is insufficient. Alternatively, one can increase damping, or even entirely shape the behavior using dampers, but that can slow the system when mutliple contributions combine. Geometric forces, on the other hand, are path consistent and will shape the behavior without shifting the system local minima.

\fullversiononly
Note that the second and third limitations are largely the motivation for hierarchical task prioritization \cite{dietrich2018HierarchicalOpSpaceControl}---when multiple potentials may conflict or priority metrics are compromised, designers preserve task fidelity by instituting a task hierarchy whereby lower-priority tasks are projected into the null-space of higher-priority tasks \cite{sentis2006whole, khatib1987unified}. However, these task hierarchies are often pragmatic solutions to these fundamental problems and somewhat artificial in practice. For instance, the relative priorities of reaching targets and avoiding obstacles should adjust continuously; it is unclear which is globally higher priority.
\fi

Geometric fabrics remove these limitations by (1) introducing velocity-dependent metrics using Finsler geometries, and (2) decoupling system geometry from priority metric using bending terms. Now, the system's behavior is primarily directed by the fabric, the potential is applied for a specific task, and damping for convergence and speed control.

\section{Geometric Fabrics: A New Physics of Behavior}

Geometric fabrics are a strict generalization of classical mechanics (with classical systems being a type of fabric) which retain the powerful mathematical properties exploited in control while removing the expressivity limitations outlined in Subsection~\ref{sec:ShortcomingsOfClassicalMechanics}. In particular, fabrics' priority metrics depend on full state and are decoupled from their associated geometric policy. Stability of these systems is easily analyzed using energy conservation (see Theorem~\ref{thm:GeometricFabricStability}). We derive fabrics as bent Finsler geometries using what we call {\em bending terms} which generalize constraint forces. These bending terms are powerful and can be used to bend a Finsler geometry to align with any desired HD2 geometry using a process we call geometry {\em energization}. We additionally derive formulas for calculation in generalized coordinates which we use in Section~\ref{sec:DesigningFabrics} to construct an algebra of fabric components to aid systems engineering.

\subsection{Bending Finsler geometries}
\label{sec:BendingFinslerGeometries}

An important concept in geometric fabrics is the {\em bending} of a Finsler geometry. Geometric bending is implemented using system modification terms that both perform no work (conserve energy) and are geometric (HD2).
An example of such a bending term is the classical constraint force that keeps a system moving along a constraint surface. These forces are zero-work by design (see the Principal of Virtual Work \cite{ClassicalMechanicsTaylor05}) and are known to preserve the geometric character of the system. Bending terms, as defined and characterized in the following Lemma, generalize classical constraints.

\begin{lemma} \label{lma:EnergyConservation}
Let $\Lag_e$ be a Finsler energy with unforced equations of motion $\M(\q,\qd)\qdd + \bm{\xi}(\q,\qd) = \zero$, and let $\psi(\q)$ be a potential function and $\B(\q, \qd)$ be a positive definite damping matrix. The modified system $\M\qdd + \bm{\xi} + \f_f = -\partial\psi - \B\qd$ with modifying term $\f_f(\q,\qd)$ has total energy $\Ham = \Lag_e + \psi$ that decreases at a rate
\begin{align}
    \dot{\Ham} = -\qd^\tr\B\qd
\end{align}
if and only if $\qd^\tr\f_f = \zero$ everywhere. We call such a term $\f_f$ a {\em zero-work modification}. When $\f_f$ is additionally HD2, the unforced system (with $\psi=0$ and $\B=\zero$) is geometric and we say $\f_f$ is a {\em bending term} that bends the Finsler geometry. Moreover, if $\psi$ is lower bounded, $\Ham$ is lower bounded. 
\end{lemma}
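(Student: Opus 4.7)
The plan is to compute $\dot\Ham$ directly along a trajectory of the modified system and match the result against $-\qd^\tr\B\qd$. The key enabling identity, recalled from the Background, is that for a Finsler energy the Hamiltonian coincides with the Lagrangian, $\Ham_e = \Lag_e$: since $\Lag_g$ is HD1 in $\qd$, $\Lag_e = \tfrac{1}{2}\Lag_g^2$ is HD2 in $\qd$, so Euler's homogeneous-function identity gives $\p^\tr\qd = 2\Lag_e$ and hence $\Ham_e := \p^\tr\qd - \Lag_e = \Lag_e$.

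The main calculation is to differentiate $\Ham_e$ via chain rule. A direct expansion gives
\begin{align}
\dot\Ham_e \;=\; \dot\p^\tr\qd + \p^\tr\qdd - \dot\Lag_e \;=\; (\dot\p - \partial_\q\Lag_e)^\tr\qd,
\end{align}
where I used $\dot\Lag_e = \partial_\q\Lag_e^\tr\qd + \p^\tr\qdd$. The Euler-Lagrange form of the modified equations of motion reads $\dot\p - \partial_\q\Lag_e = \M\qdd + \bm{\xi} = -\f_f - \partial_\q\psi - \B\qd$. Substituting, and using $\dot\Lag_e = \dot\Ham_e$ from the Finsler identification, gives $\dot\Lag_e = -\qd^\tr\f_f - \qd^\tr\partial_\q\psi - \qd^\tr\B\qd$. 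Adding $\dot\psi = \partial_\q\psi^\tr\qd$ yields
\begin{align}
\dot\Ham \;=\; \dot\Lag_e + \dot\psi \;=\; -\qd^\tr\f_f - \qd^\tr\B\qd.
\end{align}
The iff is then immediate: the forward direction is obvious; for the reverse, initial conditions may be chosen to place $(\q,\qd)$ at any point in state space, so an identity $\qd^\tr\f_f = 0$ that holds along every trajectory must hold pointwise.

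The remaining claims are short. For the HD2/geometric statement, $\M = \partial_{\qd\qd}^2\Lag_e$ is HD0 in $\qd$ (two velocity-derivatives of an HD2 quantity) and $\bm{\xi}$ is HD2 (it is the geometric-force term of a Finsler energy, as recalled in the Background). If $\f_f$ is also HD2, resolving the unforced modified system gives $\qdd = -\M^{-1}(\bm{\xi} + \f_f)$, an HD0 matrix applied to an HD2 vector, which is HD2, so the system is a geometry. For lower-boundedness, property~(1) of Definition~\ref{def:FinslerStructure} gives $\Lag_g \geq 0$, whence $\Lag_e = \tfrac{1}{2}\Lag_g^2 \geq 0$, and any lower bound on $\psi$ transfers to $\Ham$.

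The hardest step is really just justifying the word "everywhere" in the iff: one needs the ODE to admit a trajectory through each state $(\q,\qd)$ so that an on-trajectory identity promotes to a pointwise statement, which follows from standard existence theory under the smoothness of $\M,\bm{\xi},\f_f,\psi,\B$ implicit in the setup. Everything else is a mechanical chain-rule computation once the Finsler identification $\Ham_e = \Lag_e$ is in hand.
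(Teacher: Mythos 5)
Your proof is correct and follows essentially the same approach as the paper's: a chain-rule computation of $\dot{\Ham}$ driven by the Euler--Lagrange identity $\dot{\p} - \partial_\q\Lag_e = \M\qdd + \bm{\xi}$, followed by substitution of the modified equations of motion and then the straightforward HD0/HD2 homogeneity and lower-boundedness observations. The only organizational difference is that the paper packages the core identity $\dot{\Ham}_\Lag = \qd^\tr(\M\qdd+\bm{\xi})$ as a standalone lemma (applied to $\Lag = \Lag_e - \psi$), whereas you derive it inline for $\Lag_e$ and handle $\dot{\psi}$ separately; your remark on why the iff promotes to a pointwise ``everywhere'' statement is a slight strengthening of a step the paper leaves implicit.
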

\begin{proof}
See \apx~\ref{apx:LemmaEnergyConservation}.
\end{proof}
Bending terms are geometric forces that are everywhere orthogonal to the direction of motion and thereby conserve energy by performing no work. 

\subsection{Geometric fabrics}
\label{sec:GeometricFabrics}

Geometric fabrics generalize classical systems in two steps: (1) they generalize the classical Riemannian geometries to the broader class of Finsler geometries; and (2) they bend the resulting Finsler geometry using bending terms which include, but are not limited to, constraint forces.

\begin{definition}[Geometric fabric]
Let $\Lag_e = \frac{1}{2}\Lag_g^2$ be a Finsler energy derived from Finsler structure $\Lag_g$
with equations of motion $\M(\q, \qd)\qdd + \bm{\xi}(\q, \qd) = \zero$. 
A geometric fabric is a system that evolves according to
\begin{align} \label{eqn:GeometricFabricEquationsOfMotion}
    &\M(\q, \qd)\qdd + \bm{\xi}(\q, \qd) + \f_f(\q, \qd) = \zero,
\end{align}
where $\f_f(\q, \qd)$ is any bending term. Geometric fabrics are denoted $\mathcal{F} = (\Lag_e, \f_f)$.
\end{definition}
The fabric defines a nominal behavior independent of a specific task. Potential functions then push away from that nominal behavior to drive the system toward task goals, and damping dissipation ensures convergence. The full equations of motion of the system then 
\fullversiononly
become:\footnote{A broader theory of {\em optimization} fabrics \cite{optimizationFabricsForBehavioralDesignArXiv2020} drops the geometric requirement but retains the optimization properties outlined in Theorem~\ref{thm:GeometricFabricStability}. Geometric fabrics are a type of fabric that most naturally generalizes classical mechanics. We refer to geometric fabrics simply as fabrics.}
\else 
become:
\fi
\begin{align}\label{eqn:ForcedFabricEquationsOfMotion}
    &\M(\q, \qd)\qdd + \bm{\xi}(\q, \qd) + \f_f(\q, \qd) + \J(\q)^\tr\lm(\q,\qd) \\\nonumber
    &\ \ \ \ = -\partial_\q\psi(\q) - \B(\q, \qd)\qd,
\end{align}
where $\psi(\q)$ is a potential function, $\B(\q, \qd)$ is a positive definite damping matrix, and $\lm$ are the Lagrange multipliers of constraint forces with constraint Jacobian $\J$. We see many similarities to the classical equations given in Subsection~\ref{sec:ShortcomingsOfClassicalMechanics}. The potential function injects energy into the system and drives it toward its local minimum, while the damper bleeds energy off for stability. Lemma~\ref{lma:ConstrainedGeometricFabrics} characterizes constrained geometric fabrics and shows that a constrained geometric fabric is itself a geometric fabric. Theorem~\ref{thm:GeometricFabricStability} proves stability of these systems.

\begin{lemma} \label{lma:ConstrainedGeometricFabrics}
Let $(\Lag_e, \f_f)$ be a geometric fabric with Finsler energy $\Lag_e$, bending term $\f_f$, and associated equations of motion $\M\qdd + \bm{\xi} + \f_f = -\partial\psi - \B\qd$ for potential $\psi$ and damping matrix $\B$. Under equality constraint $\C(\q) = \zero$, assuming $(\q,\qd)$ already satisfy the constraint, the resulting constrained equations of motion are
\begin{align}\label{eqn:ConstrainedGeometricFabrics}
    \M\qdd + \mP_\paral\big(\bm{\xi}+\f_f\big) + \mP_\perp \M\Jd\qd
    = -\mP_\paral\big(\partial\psi + \B\qd\big)
\end{align}
where, $\J = \partial_\q\C$ denotes the constraint Jacobian, $\mP_\perp = \J^\tr(\J\M^{-1}\J^\tr)^{-1}\J\M^{-1}$ projects orthogonally to the constraints, and $\mP_\paral = \I - \mP_\perp$ projects parallel to the constraints. Importantly, the left hand side defines a fabric $(\Lag_e, \wt{\f}_f)$ where $\wt{\f}_f = \mP_\perp(\M\Jd\qd - \bm{\xi}) + \mP_\paral\f_f$ is a {\em constrained} fabric.
\end{lemma}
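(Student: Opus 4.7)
The plan is to derive the constrained equations of motion by the standard Lagrange-multiplier calculation and then verify that the effective forcing gathered on the left-hand side is again a bending term in the sense of Lemma~\ref{lma:EnergyConservation}, so that the constrained system is itself a geometric fabric with the same Finsler energy $\Lag_e$.

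First, I would append the d'Alembertian constraint force $\J^\tr\lm$ to the fabric dynamics,
\begin{align}\nonumber
    \M\qdd + \bm{\xi} + \f_f + \J^\tr\lm = -\partial\psi - \B\qd,
\end{align}
and twice-differentiate $\C(\q) = \zero$ in time. Under the hypothesis that $(\q,\qd)$ already satisfy $\C=\zero$ (so $\J\qd=\zero$ as well), this yields the acceleration-level constraint $\J\qdd+\Jd\qd=\zero$. Pre-multiplying the dynamics by $\J\M^{-1}$ and substituting this relation allows a closed-form solution for $\lm$; plugging $\J^\tr\lm$ back in and regrouping using $\mP_\paral+\mP_\perp=\I$ produces the projection-operator form \eqref{eqn:ConstrainedGeometricFabrics}.

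Next, I would read off the effective forcing from the left-hand side as $\wt{\f}_f = \mP_\perp(\M\Jd\qd-\bm{\xi}) + \mP_\paral\f_f$ (obtained by subtracting $\M\qdd+\bm{\xi}$ and applying $\mP_\paral=\I-\mP_\perp$ to $\bm{\xi}+\f_f$) and verify its two defining properties. For zero work, the key identity is $\qd^\tr\mP_\perp = (\J\qd)^\tr(\J\M^{-1}\J^\tr)^{-1}\J\M^{-1} = \zero$ because $\J\qd=\zero$ on the constraint manifold; this forces $\qd^\tr\mP_\paral=\qd^\tr$ and hence $\qd^\tr\wt{\f}_f = \qd^\tr\f_f = 0$, inheriting the zero-work property of $\f_f$. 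For HD2, $\J$ depends on $\q$ alone so $\Jd$ is HD1 in $\qd$ and $\Jd\qd$ is HD2; $\M$ is HD0 and hence $\mP_\perp$ and $\mP_\paral$ are HD0; and $\bm{\xi}$ and $\f_f$ are HD2 by construction. Each summand of $\wt{\f}_f$ is therefore HD2. Lemma~\ref{lma:EnergyConservation} then identifies $\wt{\f}_f$ as a bending term, so $(\Lag_e,\wt{\f}_f)$ is a geometric fabric.

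The main obstacle will be the bookkeeping in the first step: every term on the right-hand side acquires a $\mP_\perp$ copy from $\J^\tr\lm$ and must be re-split using $\I=\mP_\paral+\mP_\perp$ to pull all $\mP_\perp$ contributions to the left while leaving only $\mP_\paral$-filtered potential and damping terms on the right, and the extra bias $\Jd\qd$ produced by enforcing the constraint must be massaged into the stated $\mP_\perp\M\Jd\qd$ form using the projector identities $\mP_\perp\J^\tr=\J^\tr$ and $\mP_\perp\M=\J^\tr(\J\M^{-1}\J^\tr)^{-1}\J$. Once that rearrangement is clean, the identification of $\wt{\f}_f$ and the bending-term verification above each follow in essentially one line of algebra.
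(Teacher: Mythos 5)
Your plan is correct and is essentially the proof the paper gives: introduce the Lagrange-multiplier constraint force $\J^\tr\lm$, differentiate $\C(\q)=\zero$ twice, premultiply by $\J\M^{-1}$ to eliminate $\lm$, and regroup with $\mP_\paral+\mP_\perp=\I$ to obtain Equation~\eqref{eqn:ConstrainedGeometricFabrics}, then check that $\wt{\f}_f$ is HD2 (since $\M$, $\mP_\perp$, $\mP_\paral$ are HD0 and $\f_f$, $\bm{\xi}$, $\Jd\qd$ are HD2) and does zero work. The only cosmetic difference is in the zero-work step: you verify $\qd^\tr\mP_\perp=\zero$ explicitly from $\J\qd=\zero$, while the paper appeals to the fact that $\wt{\f}_f-\f_f$ is exactly the constraint force $\J^\tr\lm$ of the unforced system, which is zero-work for the same reason — these are the same observation stated two ways.
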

\begin{proof}
See \apx~\ref{apx:LemmaConstrainedGeometricFabrics}.
\end{proof}

The constrained system in Equation~\ref{eqn:ConstrainedGeometricFabrics} is intuitive. $\mP_\paral\big(\bm{\xi}+\f_f\big)$ projects the original fabric to operate parallel to the constraint and $-\mP_\paral(\partial\psi+\B\qd)$ does the same with the forcing terms. $\mP_\perp\M\Jd\qd$ independently creates the required curvature force to keep the system moving along the constraint, modifying the original fabric's bending term. 

\subsection{Stability analysis}
\label{sec:StabilityAnalysis}

The following theorem presents a general result on the stability and convergence of forced geometric fabrics.
\begin{theorem}[Geometric fabric stability] \label{thm:GeometricFabricStability}
Suppose $\mathcal{F} = (\Lag_e, \f_f)$ is geometric fabric operating under equality constraints $\mc(\q)=\zero$ and inequality constraints $\g(\q)\geq\zero$ (both of which may be empty). Then for any $\psi(\q)$ lower-bounded on the constrained domain and any bounded positive-definite damping matrix $\B(\q, \qd)$ the system evolving according to Equation~\ref{eqn:ForcedFabricEquationsOfMotion} (with $\C$ consisting of $\mc$ and the active subset of $\g$)
converges to a local minimum of the optimization problem
\begin{align}
    &\min_{\q\in\mathcal{Q}} \psi(\q)
    \ \ \mathrm{s.t.}\ \ \mc(\q) = \zero\ \ \mathrm{and}\ \ \g(\q)\geq\zero
\end{align}
under any contact model contributing as equality constraints with dissipation when in contact and conserving or dissipating energy on impact.  
\end{theorem}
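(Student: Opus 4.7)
The plan is to use the Hamiltonian (total energy) $\Ham = \Lag_e + \psi$ as a Lyapunov function and then invoke LaSalle's invariance principle, closing the argument by examining the residual equations at zero velocity. First I would reduce to the constrained setting using Lemma~\ref{lma:ConstrainedGeometricFabrics}: on any manifold patch where a fixed active set $C$ (the equality constraints $\mc$ together with the currently active inequality constraints from $\g$) holds, the constrained system is itself a geometric fabric $(\Lag_e, \wt{\f}_f)$ with a modified bending term. Thus Lemma~\ref{lma:EnergyConservation} applies and gives $\dot{\Ham} = -\qd^\tr \B \qd \leq 0$ along trajectories on the constraint patch, with $\Ham$ lower-bounded since $\Lag_e \geq 0$ and $\psi$ is bounded below on the constrained domain.

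Next I would argue convergence of $\Ham$ and apply LaSalle's invariance principle. Because $\dot{\Ham} \leq 0$ and $\Ham$ is bounded below, $\Ham$ converges, and trajectories remain in the sublevel set $\{\Ham \leq \Ham(\q_0, \qd_0)\}$, which is bounded whenever $\psi$ has compact sublevel sets on $\mathcal{Q}$ (otherwise the theorem is understood in a local sense). The largest invariant set on which $\dot{\Ham}=0$ forces $\qd^\tr\B\qd = 0$, hence $\qd = \zero$ since $\B \succ 0$; and invariance additionally requires $\qdd = \zero$. Evaluating Equation~\ref{eqn:ForcedFabricEquationsOfMotion} at $\qd = \zero, \qdd = \zero$, both $\bm{\xi}(\q,\qd)$ and $\f_f(\q,\qd)$ vanish by HD2 homogeneity in velocity, so the residual equation becomes $\partial_\q\psi(\q) + \J(\q)^\tr\lm = \zero$ subject to complementary slackness on the active inequality constraints, which is precisely the KKT stationarity condition for the constrained problem $\min \psi$ s.t. $\mc = \zero, \g \geq \zero$. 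Combined with primal feasibility (maintained by the constraint forces throughout), these points are exactly the constrained critical points, and trajectories approach this set.

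For the contact-model aspect, I would treat impacts with an inequality constraint becoming active as instantaneous transitions that either conserve $\Ham$ (elastic) or dissipate it (inelastic/plastic), so that $\Ham$ remains non-increasing across impacts; during sustained contact, the constraint is treated as an equality constraint with additional dissipation, keeping the Lyapunov argument intact. Switches in the active set therefore only decrease $\Ham$, so the overall Lyapunov chain of inequalities is preserved across hybrid mode transitions.

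The main obstacle, in my view, is the hybrid nature introduced by inequality constraints and contact: LaSalle in its standard form assumes smooth dynamics, so care is needed to argue that the finite (or at worst Zeno-free) sequence of active-set changes does not prevent convergence, and that the union of critical points over the finitely many active sets is the correct limit set. A clean way to handle this is to note that $\Ham$ is non-increasing globally (across smooth evolution and across admissible impacts by hypothesis), apply LaSalle mode-wise, and observe that any accumulation point must be an invariant equilibrium of \emph{some} mode, which by the residual KKT analysis above is a constrained local minimum of $\psi$.
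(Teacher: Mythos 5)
Your proposal follows essentially the same route as the paper's proof: the Lyapunov function is $\Ham = \Lag_e + \psi$, the energy-decrease rate $\dot{\Ham} \le -\qd^\tr\B\qd$ comes from Lemma~\ref{lma:EnergyConservation} (the bending term's zero-work property), and the residual analysis at $\qd \to \zero$, $\qdd \to \zero$ uses HD2 homogeneity of $\bm{\xi}$ and $\f_f$ plus HD0 of $\M$ to reduce Equation~\ref{eqn:ForcedFabricEquationsOfMotion} to the KKT stationarity condition $\J^\tr\lm = -\partial_\q\psi$. The main differences are that you are somewhat more careful than the paper: you invoke LaSalle's invariance principle explicitly (the paper simply asserts $\dot{\Ham}\to 0$ implies $\qd\to\zero$ and $\qdd\to\zero$), you flag the need for compact sublevel sets of $\psi$ for boundedness of trajectories, and you treat the hybrid mode-switching from the inequality constraints and contact impacts explicitly with a mode-wise LaSalle argument, whereas the paper absorbs these considerations into the inequality $\dot{\Ham}\leq-\qd^\tr\B\qd$ and a brief appeal to the contact model's complementarity conditions.
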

\begin{proof}
See \apx~\ref{apx:TheoremGeometricFabricStability}
\end{proof}

\subsection{Completeness: Energizing HD2 geometries}
\label{sec:EnergizingHd2Geometries}

The next theorem is central to the theory of fabrics. It shows fabrics are complete in the sense that they can be bent to align with any given geometry. Completeness shows we have independent control over priority metric design (Finsler metric) and geometric policy design (desired geometry).
\begin{theorem}[Energized fabrics] \label{thm:SystemEnergization}
Let $\qdd = \pi(\q, \qd)$ be an HD2 geometry, and suppose $\Lag_e$ is a Finsler energy with equations of motion $\M\qdd + \bm{\xi} = \zero$ and energy $\Ham_e=\Lag_e$. Then $\qdd = \pi(\q, \qd) + \alpha_{\Ham_e}\qd$ is energy conserving when
\begin{align}\label{eqn:EnergizationTransformAlpha}
    \alpha_{\Ham_e} = -(\qd^\tr\M\qd)^{-1}\qd^\tr\big[\M\pi + \bm{\xi}\big],
\end{align}
and differs from the original system by only an acceleration along the direction of motion. 
The new system is a geometric fabric, known as an {\em energized fabric}, expressed as:
\begin{align} \label{eqn:ZeroWorkEnergizationForm}
  \M\qdd + \bm{\xi} - \mP_e\big[\M\pi + \bm{\xi}\big] = \zero,
\end{align}
where $\mP_e = \M^{\,\frac{1}{2}}\Big[\I - \hat{\vv} \hat{\vv}^\tr\Big]\M^{-\frac{1}{2}}$ is a metric-weighted projection matrix
with $\vv = \M^{\,\frac{1}{2}}\xd$ and $\hat{\vv} = \vv/\|\vv\|$. 
\end{theorem}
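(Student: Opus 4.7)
The plan is to proceed in four steps, beginning with a key identity relating the rate of change of the Finsler energy to the residual of the equations of motion. Since $\Lag_e$ is HD2 in $\qd$, Euler's homogeneous function theorem gives $\qd^\tr\partial_\qd\Lag_e = 2\Lag_e$, so differentiating in time yields $\frac{d}{dt}(\qd^\tr\partial_\qd\Lag_e) = 2\dot{\Lag}_e$. Expanding the left-hand side with the product rule and substituting the Euler-Lagrange residual $\M\qdd + \bm{\xi} = \frac{d}{dt}\partial_\qd\Lag_e - \partial_\q\Lag_e$, I would obtain the clean identity
\begin{align}\nonumber
    \dot{\Ham}_e = \qd^\tr\big(\M\qdd + \bm{\xi}\big),
\end{align}
using that $\Ham_e = \Lag_e$ for Finsler energies. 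This identity is the main workhorse of the proof and, I expect, the step that requires the most care.

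Second, I would plug the modified flow $\qdd = \pi + \alpha\qd$ into this identity to obtain $\dot{\Ham}_e = \qd^\tr\M\pi + \qd^\tr\bm{\xi} + \alpha\,\qd^\tr\M\qd$. Solving $\dot{\Ham}_e = 0$ for $\alpha$ gives exactly the claimed formula for $\alpha_{\Ham_e}$, and the fact that the added term $\alpha_{\Ham_e}\qd$ points along $\qd$ is immediate, establishing that the energization only modifies the acceleration in the direction of motion.

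Third, I would verify that the system is a geometric fabric. Rearranging the modified flow as $\M\qdd + \bm{\xi} + \f_f = \zero$ with $\f_f = -(\M\pi + \bm{\xi} + \alpha_{\Ham_e}\M\qd)$, I would show $\f_f$ is a bending term in the sense of Lemma~\ref{lma:EnergyConservation}: the zero-work condition $\qd^\tr\f_f = 0$ follows by direct substitution of $\alpha_{\Ham_e}$, and the HD2 property follows from tracking homogeneity degrees (noting $\pi$ and $\bm{\xi}$ are HD2, $\M$ is HD0, hence $\alpha_{\Ham_e}$ is HD1 and $\alpha_{\Ham_e}\M\qd$ is HD2).

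Finally, I would derive the projection form by substituting $\alpha_{\Ham_e}$ into $\M\qdd + \bm{\xi} = \M\pi + \bm{\xi} + \alpha_{\Ham_e}\M\qd$ to obtain
\begin{align}\nonumber
    \M\qdd + \bm{\xi} = \left[\I - \frac{\M\qd\qd^\tr}{\qd^\tr\M\qd}\right]\big[\M\pi + \bm{\xi}\big],
\end{align}
then recognize the bracketed matrix as $\mP_e$ by the change of variables $\vv = \M^{\,\frac{1}{2}}\qd$, under which $\hat{\vv}\hat{\vv}^\tr = \M^{\,\frac{1}{2}}\qd\qd^\tr\M^{\,\frac{1}{2}}/(\qd^\tr\M\qd)$, giving $\M^{\,\frac{1}{2}}[\I - \hat{\vv}\hat{\vv}^\tr]\M^{-\frac{1}{2}} = \I - \M\qd\qd^\tr/(\qd^\tr\M\qd)$. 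This yields Equation~\ref{eqn:ZeroWorkEnergizationForm} directly. The only nontrivial step is the opening identity; everything else is algebraic bookkeeping and homogeneity accounting.
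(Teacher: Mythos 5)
Your proof is correct and takes essentially the same route as the paper's: both hinge on the identity $\dot{\Ham}_e = \qd^\tr(\M\qdd + \bm{\xi})$, substitute the modified flow, solve for $\alpha_{\Ham_e}$, and then recognize the resulting matrix as $\mP_e$. The only cosmetic differences are that you derive the key identity via Euler's homogeneous theorem on $\Lag_e$ rather than by differentiating the Hamiltonian $\Ham = \partial_\qd\Lag^\tr\qd - \Lag$ directly (the paper's Lemma~\ref{lma:EnergyTimeDerivative}), and you pass straight to $\I - \M\qd\qd^\tr/(\qd^\tr\M\qd) = \mP_e$ where the paper routes through the auxiliary matrix $\mR_\p$ of Lemma~\ref{lma:EnergyProjection}; both shortcuts are valid.
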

\begin{proof}
See \apx~\ref{apx:TheoremOnEnergizedFabrics}.
\end{proof}
We denote the energization operation using an operator $\mathrm{energize}_{\Lag_e}[\cdot]$ that outputs a geometric fabric given an HD2 geometry $\pi$ so that 
\begin{align}
    \mathrm{energize}_{\Lag_e}[\pi] = \Big(\Lag_e,\: -\mP_e\big[\M\pi+\bm{\xi}\big]\Big)    
\end{align}
is the energized geometry. Note that the energization operator can be applied to a non-geometric policy as well. In that case, we write the output in the same form using the same energization coefficient calculation in Equation~\ref{eqn:EnergizationTransformAlpha}, but with the knowledge that the resulting transformed system will conserve energy but not be a geometric fabric.

An informative example is when $\Lag_e = \frac{1}{2}\|\qd\|^2$ measures the squared velocity norm. Here $\M = \I$ and $\bm{\xi} = \zero$, so under Equation~\ref{eqn:EnergizationTransformAlpha} the system reduces to 
\begin{align}
    \qdd = \left[\I - \widehat{\qd}\,\widehat{\qd}^\tr\right]\pi(\q, \qd),
\end{align}
where $\widehat{\qd} = \qd / \|\qd\|$ is the normalized velocity.
In other words, the energized system projects accelerations of the geometric policy $\pi$ orthogonal to the direction of motion so they curve the system without changing its speed.

\subsection{Constraints in generalized coordinates}
\label{sec:GeneralizedCoordinates}

We showed above that we can construct fabrics by energizing geometries (see Theorem~\ref{thm:SystemEnergization}) and constraining them (see Theorem~\ref{lma:ConstrainedGeometricFabrics}).
There we calculated explicit constraint forces in the ambient space for analysis, but in many cases, it's natural to express constrained systems in generalized coordinates such as the joint angles of the robot. This section presents a fundamental result showing how to calculate an energized policy in generalized coordinates.

First we formally derive how to constrain systems using generalized coordinates. Let $\M(\x,\xd)\xdd + \f(\x,\xd) = \zero$ be any system of this form defined on a space $\mathcal{X}$ ($\M$ need not derive from a Finsler energy and $\f$ captures all system forces). Suppose $\phi:\mathcal{Q}\rightarrow\mathcal{X}$ defines a surface in $\mathcal{X}$ with generalized coordinates $\mathcal{Q}$ and Jacobian $\J=\partial_\q\phi$ (note that the rows of $\J$ here are orthogonal to those of the implicit surface Jacobian due to the differing representation). We have $\x = \phi(\q)$, $\xd = \J\qd$, and $\xdd = \J\qdd + \Jd\qd$; plugging that last expression for $\xdd$ into the system equation and multiplying through by $\J$ gives an expression for the constrained system in generalized coordinates:
\begin{align} \label{eqn:PullbackDerived}
    \Big(\J^\tr\M\J\Big) \qdd + \J^\tr\big(\f + \M\Jd\qd\big) = \zero.
\end{align}
This equation has the same form as the original system in $\mathcal{X}$, but with $\M\rightarrow \J^\tr\M\J$ and $\f\rightarrow \J^\tr\big(\f + \M\Jd\qd\big)$. This transformation is known as the {\em pullback} of the system into the generalized coordinates, and can be expressed as an operation on system parameters
\begin{align} \label{eqn:SpecPullback}
    \mathrm{pull}_\phi\big[(\M,\f)\big] = \Big(\J^\tr\M\J,\,\J^\tr\big(\f + \M\Jd\qd\big)\Big).
\end{align}
For Lagrangian systems, this pullback operation is consistent with the generalized coordinate calculation of the Euler-Lagrange equation. Specifically, if $\Lag(\x,\xd)$ is any Lagrangian in $\mathcal{X}$, we can define
\begin{align}
    \mathrm{pull}_\phi\big[\Lag\big] = \Lag\Big(\phi(\q), \frac{d}{dt}\phi(\q)\Big).
\end{align}
Then if $\mathrm{EL}[\Lag] = \big(\M,\bm{\xi}\big)$ denotes the application of the Euler-Lagrange equation to construct equations of motion $\M\xdd + \bm{\xi} = \zero$, we have
\begin{align} \label{eqn:PullbackCommutesWithEL}
    \mathrm{pull}_\phi\big[\mathrm{EL}[\Lag]\big]
    = \mathrm{EL}\big[\mathrm{pull}_\phi[\Lag]\big]
    = \mathrm{pull}_\phi\big[(\M,\bm{\xi})\big].
\end{align}

The following theorem reveals that a constrained energized system in generalized coordinates can be constructed by first constraining the metric-weighted geometry then energizing directly in the generalized coordinates. 

\begin{theorem}\label{thm:FabricsInGeneralizedCoords}
Let $\Lag_e^\mathcal{X}(\x,\xd)$ be a Finsler energy on $\mathcal{X}$ with equations of motion $\M\xdd + \bm{\xi} = \zero$ and let $\pi(\x,\xd)$ be an HD2 geometry on $\mathcal{X}$. If $\phi:\mathcal{Q}\rightarrow\mathcal{X}$ is a full rank differentiable map defining a constraint in $\mathcal{X}$ with generalized coordinates $\mathcal{Q}$, then 
\begin{align}\nonumber
    \mathrm{pull}_\phi\big[\mathrm{energize}_{\Lag_e^\mathcal{X}}[\pi]\big]
    = \mathrm{energize}_{\Lag_e^\mathcal{Q}}
        \Big[\mathrm{pull}_\phi\big[\big(\M, -\M\pi\big)\big]\Big],
\end{align}
where $\Lag_e^\mathcal{Q}=\mathrm{pull}_\phi[\Lag_e^\mathcal{X}]$ and $\big(\M, -\M\pi\big)$ denotes the parameters of the metric weighted geometry $\M\big(\xdd - \pi(\x,\xd)\big) = \zero$.
\end{theorem}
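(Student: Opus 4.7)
The plan is to expand both sides of the identity in their $(\M,\f)$ parameter representation and match them term by term. The key technical fact is a commutation identity between the energization projector and the Jacobian $\J = \partial_\q\phi$: writing $\mP_e^\XX$ and $\mP_e^\QQ$ for the metric-weighted projectors of Theorem~\ref{thm:SystemEnergization} associated with $\Lag_e^\XX$ and $\Lag_e^\QQ$, I would establish
\begin{align*}
    \J^\tr \mP_e^\XX = \mP_e^\QQ \J^\tr
\end{align*}
as linear operators from $\XX$-vectors to $\QQ$-vectors. Given this identity, the rest of the argument is bookkeeping.

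First, I would compute the LHS. By Theorem~\ref{thm:SystemEnergization}, $\mathrm{energize}_{\Lag_e^\XX}[\pi]$ has parameters $\big(\M,\; \bm{\xi} - \mP_e^\XX[\M\pi + \bm{\xi}]\big)$, where $\M$ and $\bm{\xi}$ are read off $\Lag_e^\XX$. Applying the pullback formula (Equation~\ref{eqn:SpecPullback}) then yields
\begin{align*}
    \mathrm{pull}_\phi\big[\mathrm{energize}_{\Lag_e^\XX}[\pi]\big] = \Big(\J^\tr\M\J,\; \J^\tr\bm{\xi} + \J^\tr\M\Jd\qd - \J^\tr\mP_e^\XX[\M\pi + \bm{\xi}]\Big).
\end{align*}

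Next, for the RHS, I would pull back the metric-weighted geometry $(\M, -\M\pi)$ to obtain $(\J^\tr\M\J,\, -\J^\tr\M\pi + \J^\tr\M\Jd\qd)$, which encodes the generalized-coordinate policy $\pi_\QQ = (\J^\tr\M\J)^{-1}\J^\tr\M(\pi - \Jd\qd)$. By Equation~\ref{eqn:PullbackCommutesWithEL}, $\Lag_e^\QQ$ has Euler--Lagrange parameters $\M_\QQ = \J^\tr\M\J$ and $\bm{\xi}_\QQ = \J^\tr(\bm{\xi} + \M\Jd\qd)$. Feeding $\pi_\QQ$ into $\mathrm{energize}_{\Lag_e^\QQ}$ and simplifying $\M_\QQ\pi_\QQ + \bm{\xi}_\QQ$, the $\M\Jd\qd$ contributions cancel cleanly to leave $\J^\tr(\M\pi + \bm{\xi})$, so
\begin{align*}
    \mathrm{energize}_{\Lag_e^\QQ}\big[\mathrm{pull}_\phi[(\M, -\M\pi)]\big] = \Big(\J^\tr\M\J,\; \J^\tr\bm{\xi} + \J^\tr\M\Jd\qd - \mP_e^\QQ\J^\tr[\M\pi+\bm{\xi}]\Big).
\end{align*}

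Comparing the two parameter tuples reduces the theorem to the projector commutation identity. To verify it directly, I would expand $\mP_e^\XX = \I - \M\xd\xd^\tr/(\xd^\tr\M\xd)$ and the analogous $\mP_e^\QQ$ and substitute $\xd = \J\qd$; the denominators coincide since $\xd^\tr\M\xd = \qd^\tr\M_\QQ\qd$, and the outer product transforms via $\J^\tr\M\J\qd\qd^\tr\J^\tr = \M_\QQ\qd(\J\qd)^\tr$, from which $\J^\tr\mP_e^\XX = \mP_e^\QQ\J^\tr$ follows in a few lines. The main obstacle is conceptual rather than algebraic: one must carefully interpret $\mathrm{energize}_{\Lag_e^\QQ}$ applied to a pulled-back parameter pair as energizing the implied policy $\pi_\QQ$, and must invoke Equation~\ref{eqn:PullbackCommutesWithEL} so that the Finsler energy on $\QQ$ is exactly $\mathrm{pull}_\phi[\Lag_e^\XX]$. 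Once that setup is in place, the $\M\Jd\qd$ cancellation and the commutation identity make the proof essentially immediate.
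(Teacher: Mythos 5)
Your proof is correct and follows essentially the same route as the paper's: both compute the two sides by applying the pullback and energization formulas, observe that the $\M\Jd\qd$ contributions cancel in $\M_\QQ\pi_\QQ + \bm{\xi}_\QQ$, and then match the remaining projector terms using $\xd = \J\qd$. Your extraction of the commutation identity $\J^\tr\mP_e^\XX = \mP_e^\QQ\J^\tr$ as a named step is a clean reorganization of the algebra that the paper performs inline (via the substitution $\J^\tr\M\J(\J^\tr\M\J)^{-1}\J^\tr = \J^\tr = \J^\tr\M\M^{-1}$), but it is not a different argument.
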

\begin{proof}
See \apx~\ref{apx:TheoremFabricsInGeneralizedCoords}.
\end{proof}

\section{Designing with fabrics}
\label{sec:DesigningFabrics}
%\subsection{The algebra of fabric components}
% \label{sec:AlgebraOfComponents}

Theorem~\ref{thm:FabricsInGeneralizedCoords} is fundamental to fabric design because it enables us to design using transform trees. A transform tree is a tree of differentiable maps, where each node is a task space and each edge is a differentiable map mapping a parent space (domain) to a child space (co-domain). The tree-structure expresses sparsity patterns important for computational efficiency, but conceptually we can think of the tree as a single differentiable map linking the root space to the stacked joint output space of all nodes. From that view it's clear that we can exploit Theorem~\ref{thm:FabricsInGeneralizedCoords} to say that the resulting constrained system constraining the joint ambient space fabric is equivalent to the system defined by pulling the ambient space fabric to the root. 
In light that, it's often useful to design geometric fabrics in parts, viewing each part as an HD2 geometric policy $\pi$ with an associated priority metric defined by a Finsler energy $\Lag_e$, each living on a node space of a transform tree. Then computationally we can pull the pieces back to the root exploiting sparsity in the same way as in earlier work \cite{cheng2018rmpflow}.
When designing we can reason about what the geometry of each part should be and how we should prioritize it, 
understanding that they will combine a metric-weighted average of geometries.

Let $\mathcal{G} = \{\Lag, \f\}$ compactly denote a system of the form $\M\xdd + \f = \zero$ paired with a Finsler energy from which $\M$ derives. We call $\mathcal{G}$ a {\em fabric component}. Note that this system by itself is not a geometric fabric, which must be a bent Finsler system. Here $\f$ can be arbitrary. For instance, it often denotes $\f = -\M\pi$ derived from geometric policy $\pi$.

The scaled sum of two fabric components $\mathcal{G}_1 = \{\Lag_1,\f_1\}$ and $\mathcal{G}_2 = \{\Lag_2,\f_2\}$ is given by
\begin{align}
    \alpha \mathcal{G}_1 + \beta \mathcal{G}_2
    = \big\{\alpha \Lag_1 + \beta \Lag_2, \alpha \f_1 + \beta \f_2\big\},
\end{align}
where $\alpha,\beta\in\R_+$. Likewise, the pullback of a fabric component across a map $\x = \phi(\q)$ with Jacobian $\J$ is
\begin{align}
    \mathrm{pull}_\phi\big[\{\Lag, \f\}\big]
    = \Big\{\mathrm{pull}_\phi[\Lag], \J^T\big(\f + \M\Jd\qd\big)\Big\}.
\end{align}
\fullversiononly
See \apx~\ref{apx:DerivationsOfFabricComponentAlgebra} for formal derivations of these formulas. 
\fi
These expressions are easily derived using the basic definitions of Finsler energy pullback and fabric pullback from Section~\ref{sec:GeneralizedCoordinates}. It is easy to show that summation is linear (which implies associativity and commutativity), and pullback across the composition of functions is equal to the pullback across each $\mathrm{pull}_{\phi_2\circ\phi_1}[\mathcal{G}] = \mathrm{pull}_{\phi_1}[\mathrm{pull}_{\phi_2}[\mathcal{G}]]$. 

Given a transform tree, we populate the nodes with these fabric components and exploit this algebra to 
recursively pull each component from child to parent and sum the results at the parent (with optional scalar weights). What results is a final fabric component at the root $\wt{\mathcal{G}} = \{\wt{\Lag}, \wt{\f}\}$ and associated Finsler energy $\wt{\Lag}$. $\wt{\mathcal{G}}$ represents system $\wt{\M}\qdd + \wt{\f} = \zero$ in generalized coordinates defining an HD2 geometric policy $\wt{\pi} = -\wt{\M}^{-1}\wt{\f}$ since it is formed as a metric-weighted average of pullback geometries from each component (and the metrics are HD0). The final fabric is simply the energization of this geometry with the associated Finsler energy:
\begin{align}
    \mathcal{F} = \mathrm{energize}_{\wt{\Lag}}\big[\wt{\pi}\big].
\end{align}
In practice, we also leverage the geometric path consistency of the final geometric policy to regulate speed without affecting the path traced during execution (see \cite{ratliff2021geometry}).

\fullversiononly
\subsection{Speed control through damping regulation}
\label{sec:SpeedControl}

Because we have the flexibility now to encode the nominal behavior into the fabric itself we can leverage accelerations along the direction of motion (known to leave the geometric paths unchanged) to design dampers that regulate speed without affecting the behavior.
We use dampers of the form
\begin{align}
    \f_\mathrm{damp} = -\beta(\q,\qd)\M(\q,\qd)\qd,
\end{align}
with $\beta\in\R_+$ so that the resulting acceleration $\M^{-1}\f_\mathrm{damp} = -\beta\qd$ acts solely along the direction of motion. 
The potential then has a dual role to (1) speed the system (the acceleration component parallel to the direction of motion) and (2) force the system off a nominal path toward the goal (the acceleration component orthogonal to the direction of motion). 
See Appendix~\ref{apx:FormulasSpeedControl} for a discussion on how to calculate $\beta$ to effectively control a target measure of speed.

\subsection{Tools for fabric design}
\label{sec:ToolsForFabricDesign}

Finsler energies and HD2 geometries share a common requirement of being Homogeneous of Degree 2 (HD2). \apx~\ref{apx:DesigningHD2Terms} presents a number of methods for designing such functions which we use throughout our experiments. At a high-level these tend to revolve around designing HD0 terms (which can rely on velocity directionality but not the norm (speed)), and scaling them by some measure of Finsler energy (such as the squared norm of the velocity) to bring them from HD0 to HD2. We use these techniques to design the fabrics used in our experiments as described in Sections \ref{sec:ControlledParticleExperiments} and  \ref{subsec:FabricsDesign}.
\fi

% \section{Contact and simulation}

% \subsection{Lagrange multipliers and coulomb friction damping}

% \subsection{Simplest contact models: nonlinear potentials with damping}

\section{Particle Experiments} \label{sec:ControlledParticleExperiments}
Geometric fabrics have two parts to their generalization: (1) moving from Riemannian geometries to Finsler geometries; (2) bending those Finsler geometries. These experiments explore how both of those pieces affect the system by comparing four types of systems: (i) strictly Riemannian systems (classical mechanical systems); (ii) standard Finsler systems (enabling velocity-dependent metrics, but without bending terms); (iii,iv) bent variants of each of the first two. In the discussion below, we often refer to both Riemannian and Finsler systems as {\em unbent fabrics} and the bent variants as geometric fabrics; there are four combinations in total: unbent Riemannian, unbent Finsler, bent Riemannian, bent Finsler (this last one being a full geometric fabric).

We start with a collection of particles at rest to the right of a circular obstacle and we observe the behavior of different variants controlled to two different speeds attracting the particles to a point to the left of the obstacle (small square) under the influence of obstacle avoidance.

\subsection{Point Attraction}
\label{subsec:point_attraction}
This term uses task map $\x = \phi(\q) = \q - \q_d$ where $\q$, $\q_d \in \mathbb{R}^2$ are the current and desired particle position in Euclidean space. The metric is designed as $\G_\psi(\x) = (\widebar{m} - \underline{m}) e^{-(\alpha_m \|\x\|) ^ 2} \I + \underline{m} \I$, where $\widebar{m}$, $\underline{m} \in \mathbb{R}^+$ are the upper and lower isotropic masses, respectively, and $\alpha_m \in \mathbb{R}^+$ controls the width of the radial basis function. 
We design the potential gradient directly as $\partial_\q \psi(\x) = \M_\psi(\x) \partial_\q \psi_1(\x)$, with $\psi_1(\x) = k \left( \|\x\| + \frac{1}{\alpha_\psi}\log(1 + e^{-2\alpha_\psi \|\x\|} \right)$, where $k \in \mathbb{R}^+$ controls the overall gradient strength, $\alpha_\psi \in \mathbb{R}^+$ controls the transition rate of $\partial_\q \psi_1(\x)$ from a positive constant to 0. 
For the unbent fabric, the system derives from $\Lag = \xd^T \G_\psi(\x) \xd - \psi(\x)$. For the geometric fabric, we use fabric component defined by geometry $\xdd = -\partial_\x \psi_1(\x)$ and energy $\Lag_e = \xd^T \G_\psi(\x) \xd$.

\subsection{Circular Object Repulsion}
\label{subsec:circular_object_repulsion}
This term uses task map $x = \phi(\q) = \frac{\|\q - \q_o\|}{r} - 1$, where $\q_o$ is the origin of the circle and $r$ is its radius. Two different metrics are designed to prioritize object repulsion, $G_b (x) = \frac{k_b}{x^2}$ and $G_b (x, \dot{x}) = s(\dot{x}) \frac{k_b}{x^2}$. Moreover, $k_b \in \mathbb{R}^+$ is a barrier gain and $s(\dot{x}) = 1$ if $\dot{x} < 0$ and $s(\dot{x}) = 0$, otherwise. 
We design the potential gradient as $\partial_\q \psi_b(x) = M_b (x) \partial_\q \psi_{1,b}(x)$ with $\psi_{1,b}(x) = \frac{\alpha_b}{2 x^8}$, and $\alpha_b \in \mathbb{R}^+$
For the unbent fabric, the system derives from either $\Lag_1 = G_b(x) \dot{x}^2 - \psi_b(x)$ (Riemannian) or  $\Lag_2 = G_b(x, \dot{x}) \dot{x}^2 - 
\psi_b(x)$ (Finsler). For the geometric fabric, we use fabric component defined by $\ddot{x} = - s(\dot{x}) \dot{x}^2 \partial_x \psi_{1,b}(x)$ and energy either $\Lag_1$ or $\Lag_2$.

\subsection{Discussion}
\label{subsec:particle_discussion}

Fig. \ref{fig:gf_vs_lds_particles} visualizes the results of all for variants each under two different speeds $v_d=2,4$ to analyze degree of consistency under speed changes. For each variant, all particles avoid the object and reach the target position except the one shot directly at the object's local minimum. However, we observe several differences in behavior. 

First, the unbent fabrics (left column) have more pronounced changes in paths across the different speed levels. This is amplified for Finsler systems (bottom two rows) using $G_b(x, \dot{x})$ since the velocity gate does not modulate the obstacle avoidance policy. Instead, the mass of the obstacle avoidance policy vanishes, while components of its force remain. This effect is amplified when traveling at a higher velocity, producing the ``launching'' artifacts. In contrast, geometric fabrics (right column) produce more consistent paths across speed levels without any launching artifacts. 

Second, Finsler metrics (bottom two rows) facilitate straight-line motion to the desired location once past the obstacle by allowing the system to forget about it as a function of directionality. Riemannian metrics are unable to represent that directional dependence (top two rows) and can be seen clinging to the obstacle even once past it.

Finally, the repulsive forces from the unbent fabrics (right column) consistently push the center particle farther out from the object than with geometric fabrics (right column). Importantly, geometric fabrics allow for heightened obstacle avoidance behavior without shifting the system minima.

\begin{figure}[!t]
  \centering
  \fullversiononly
  \includegraphics[width=.8\linewidth]{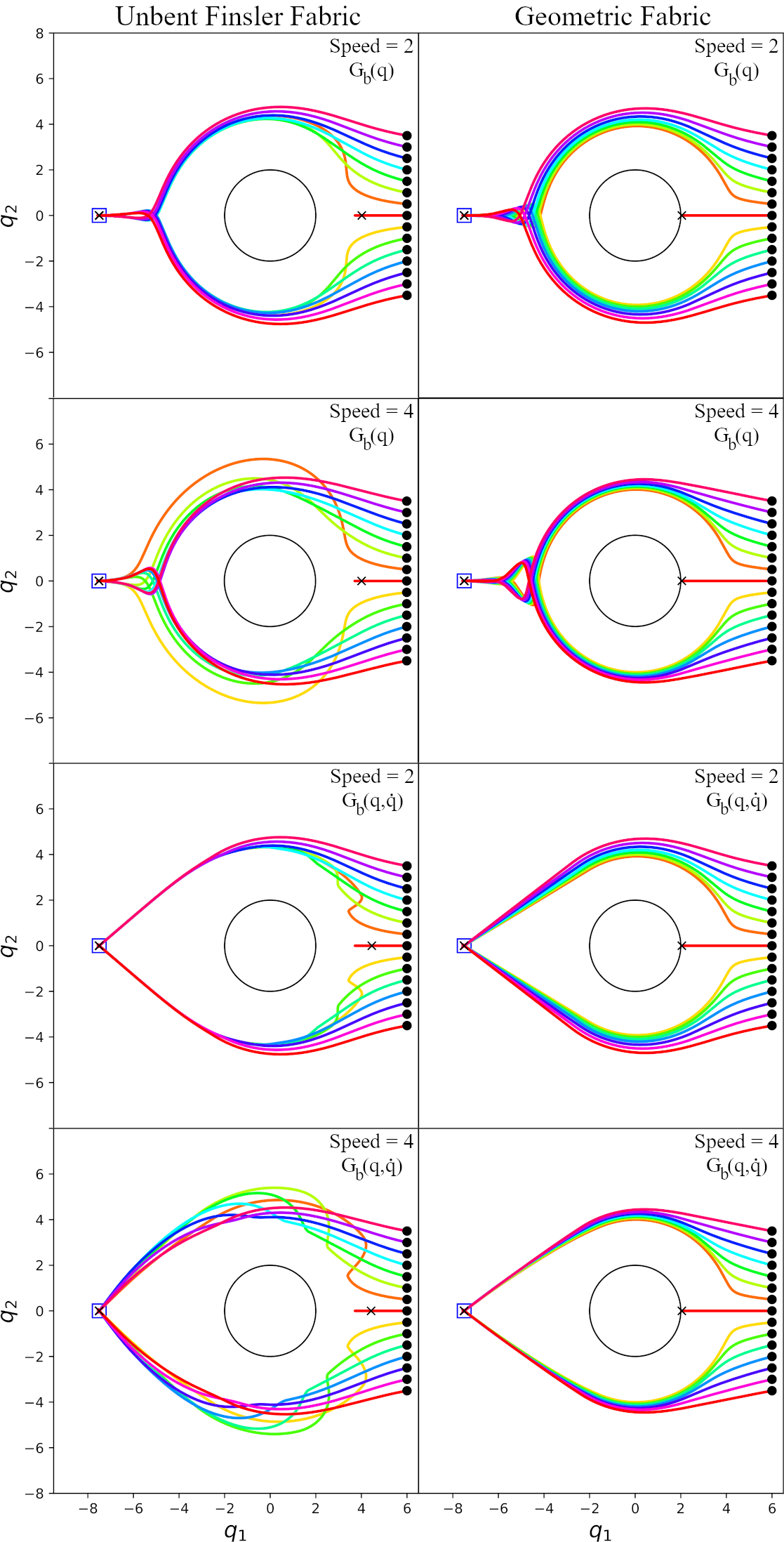}
  \else % compressed version
  \includegraphics[width=.6\linewidth]{figures/lds_vs_gf.png}
  \fi
  \vspace{-4mm}
  \caption{Particle behavior for an unbent fabric and a geometric fabrics with different metric designs. See Section \ref{subsec:particle_discussion} for discussion. }
  \vspace{-6mm}
  \label{fig:gf_vs_lds_particles}
\end{figure}

\section{Franka Panda Experiments}
\label{sec:SystemExperiments}

\compressedversiononly
\begin{figure*}[!t]
  \centering
  \includegraphics[height=.15\linewidth]{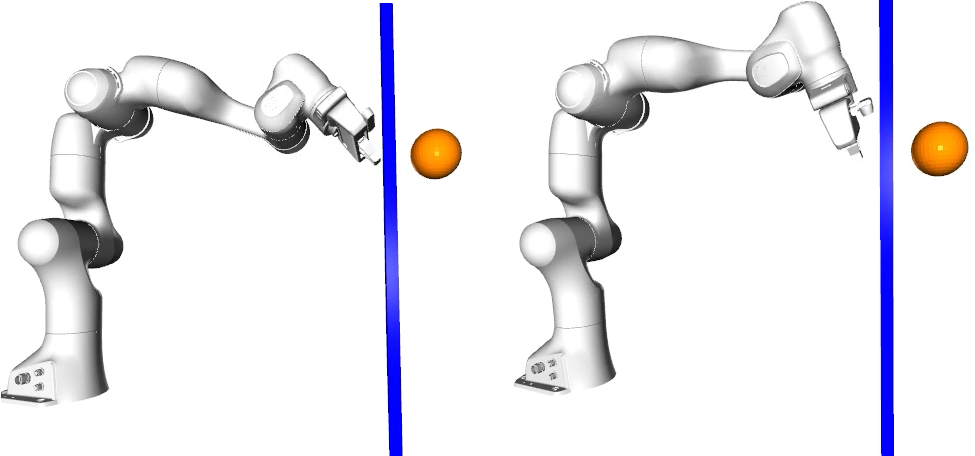}
  \hfill
  \includegraphics[height=.15\linewidth]{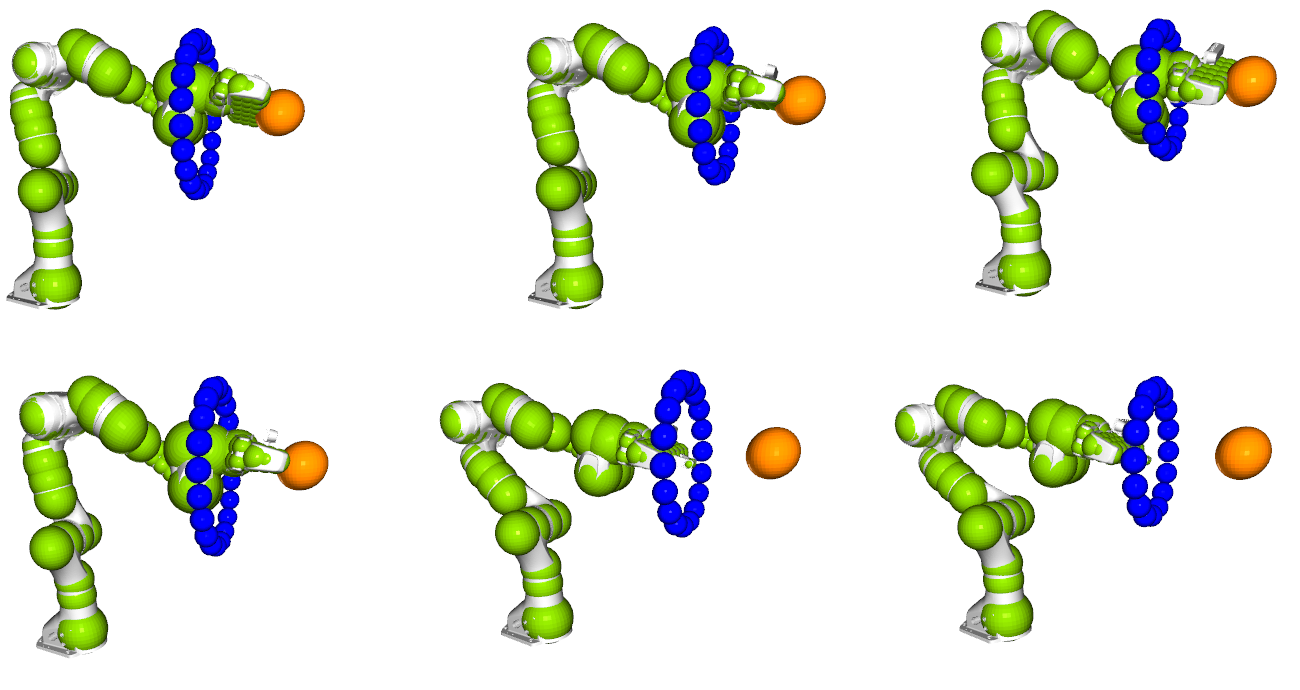}
  \hfill
  \includegraphics[height=.15\linewidth]{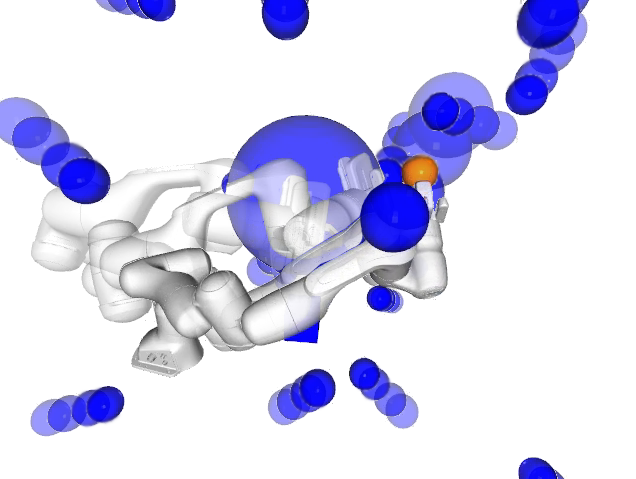}
  \hfill
  \vspace{-4mm}
  \caption{Left: Reaching a target blocked by a wall (see Section~\ref{subsec:WallBarrierApproach}). Middle: Ring reaching experiment comparing fabrics (top) and RMPs (bottom) (see Section~\ref{subsec:RingNav}). Right: dynamic obstacle avoidance experimental setup (see Section~\ref{subsec:DynamicReaching}).}
  \vspace{-4mm}
  \label{fig:experiments}
\end{figure*}
\fi

Our second set of experiments is a study of geometric fabrics on a full 7 degree-of-freedom robot arm. The geometric fabric is designed once, and tested across a variety of problems. We compare its performance to our current, best performing Riemmanian Motion Policy (RMP) that has powered our manipulation processes for a variety of settings. Both policies are evaluated at 100 Hz and integrated with first-order Euler routines. Note that since this is purely a motion generation problem, the sim-to-real gap is minimal since smooth trajectories can easily be executed on physical robots using standard trajectory following controllers. We, therefore, focus only a simulated analysis here.

\subsection{Geometric Fabrics Design}
\label{subsec:FabricsDesign}
The geometric fabric policy ultimately consists of only three differently designed components. Each component consists of a Finsler energy $\Lag_e(\x, \xd)$ and policy $\pi(\x, \xd)$.

\subsubsection{Joint Space Attraction}
The Finsler energy is designed as $\Lag_e = \frac{m}{2} \|\xd\|^2$, where $m \in \mathbb{R}^+$ is the amount of isotropic mass. The policy is designed as $\pi = -\|\xd\|^2 \partial_\x \psi(\x)$ with $\psi(\x) = k \log \left(\cosh (\alpha (\x_d - \x))\right) $ where $k$, $\alpha \in \mathbb{R}^+$ are gains that control scaling and sharpness, respectively, and $\x$ and $\x_d$ are the current and desired joint space positions. This component acts to posture the arm largely within the null space of higher-priority tasks such as target reaching.

\subsubsection{Distance Space Repulsion}
The Finsler energy is designed as $\Lag_e = \frac{k}{2 x} s(\dot{x}) \dot{x}^2$, where $k \in \mathbb{R}^+$ scales the barrier function and $s(\dot{x}) = 1$ if $\dot{x} < 0$ and $s(\dot{x}) = 0$, otherwise. 
We define a potential 
$\psi(x) = \frac{k_b}{x} + \frac{k_r}{\alpha} \log \left(1 + e^{-\alpha (x - x_o)} \right)$, where $k_b$, $k_r \in \mathbb{R}^+$ control the scaling of the barrier and soft rectified linear unit (ReLU) functions, respectively and $x \in \mathbb{R}^+$ is a signed distance space (e.g., distance to object). Moreover, $\alpha \in \mathbb{R}^+$ controls the sharpness in the ReLU and $x_o \in \mathbb{R}^+$ shifts the activation of the ReLU. We deploy this potential alongside a geometric variant $\pi = -\dot{x}^2 \partial_x \psi(x)$, for joint-limit avoidance, self-collision avoidance, and object-collision avoidance. The geometric policies are weighed significantly higher than potentials, which play the role of soft-penalty on the obstacle constraint here. The potential policy is included to shift the system minima away from numerical singularities, e.g., as $x \to 0$, then $\dot{x} \to 0$ and $\partial_x \psi(x) \to \infty$, which generates numerical issues for calculating the geometric policy.

\subsubsection{End-effector Space Attraction}
\label{subsubsec:attraction}
Using $\overline{m}$, $\underline{m} \in \mathbb{R}^+$ to denote maximum and minimum isotropic masses, respectively,
the Finsler energy is designed as $\Lag_e = \|\xd\|^2\left( \frac{1}{2} (\overline{m} - \underline{m}) (\tanh(-\alpha \|\x_d - \x\|) + 1) + \underline{m} \right)$, where $\alpha \in \mathbb{R}^+$ controls the sharpness of the hyperbolic tangent function, and $\x,\x_d \in \mathbb{R}^3$ are the position and target position of the end-effector, respectively. The 
potential is defined as 
$\psi(\x) = k \log \left(\cosh (\alpha (\x_d - \x))\right) $, where $k,\alpha \in \mathbb{R}^+$ are gains that control scaling and sharpness, respectively. We deploy both the potential and a geometric variant $\pi = -\|\xd\|^2 \partial_\x \psi(\x)$. Each are individually tuned, with the geometric variant being generally more active and the potential there to ensure convergence in alignment with our theory. Applying this design to two axes at the end-effector can also realize orientation control as in \cite{ratliff2018rmps}.

\subsection{Wall Barrier Approach} \label{subsec:WallBarrierApproach}

\fullversiononly
\begin{figure}[!t]
  \centering
  \includegraphics[width=.9\linewidth]{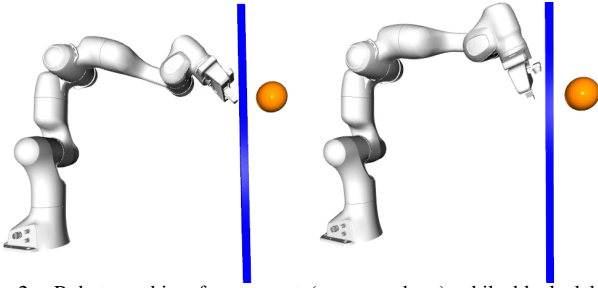}
  \vspace{-4mm}
  \caption{Robot reaching for a target (orange sphere) while blocked by an obstacle (blue) under a geometric fabric policy (left) and an RMP (right).}
%   \vspace{-4mm}
  \label{fig:wall_approach}
\end{figure}
\fi

To assess how well the policies can reach 3D end-effector position targets in the presence of boundary constraints, we incrementally advance the end-effector target every five seconds towards a wall boundary and monitor convergence 
\compressedversiononly
(see Fig. \ref{fig:experiments} (left)).
\else 
(see Fig. \ref{fig:wall_approach}).
\fi
Ideally, a policy should accurately converge to the target while respecting the boundary constraint, but in practice boundary potentials can prevent that. In this experiment, we see geometric fabrics converge more accurately in general to these targets than the RMP policy. Both policies exhibit sub-millimeter convergence error when the target is 10 cm or more from the wall, but convergence incrementally degrades as the target approaches the wall due to conflicting potentials. RMPs already start at degrading at $\sim$9 cm while geometric fabrics begin degrading only at $\sim$6 cm since much of the boundary policy is encoded in a geometric policy. Moreover, geometric fabrics on average approach the wall about 50\% closer than the RMP. 
\compressedversiononly
Figure~\ref{fig:experiments} (left) 
\else
Fig.~\ref{fig:wall_approach}
\fi
depicts this difference in convergence near the wall.

\subsection{Ring Constricted Navigation} \label{subsec:RingNav}

\fullversiononly
\begin{figure}[!t]
  \centering
  \includegraphics[width=.9\linewidth]{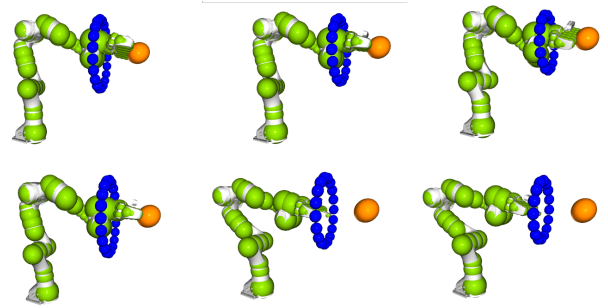}
  \vspace{-4mm}
  \caption{Robot executing reach to target (orange spheres) through a series of obstacle rings (blue) of increasingly smaller diameter with a geometric fabric (top row) and RMP (bottom row). Green spheres approximate robot geometry for collision avoidance. }
%   \vspace{-4mm}
  \label{fig:ring_navigation}
\end{figure}
\fi

To assess convergence to end-effector targets in the presence of increasingly constricted passages, we create a sequence of rings with incrementally smaller radii that the end-effector must pass through to reach its target. The ring radius starts at 17 cm and decreases to 13 cm. The rings are constructed from small spheres (each with radius 2.5 cm) as shown in 
\compressedversiononly
Figure \ref{fig:experiments} (middle). 
\else
Fig.~\ref{fig:ring_navigation}.
\fi
Distance spaces are pair-wise between robot spheres and obstacle spheres. As shown, the navigation from a geometric fabric allows the robot to successfully pass through all rings and reach its target. In contrast, the RMP cannot pass through the last two rings of the smallest diameter. Again, this difference arises with primarily encoding avoidance into the fabric's geometry alongside a diminished repulsive potential function.

\subsection{Dynamic Obstacle Reaching} \label{subsec:DynamicReaching}

\fullversiononly
\begin{figure}[!t]
  \centering
  \includegraphics[width=.8\linewidth]{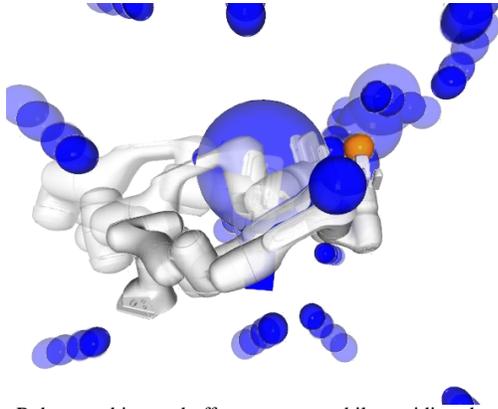}
  \vspace{-4mm}
  \caption{Robot reaching end-effector targets while avoiding dynamic objects.}
%   \vspace{-6mm}
  \label{fig:dynamic_navigation}
\end{figure}
\fi

By nature, both the geometric fabric policy and RMP are reactive. To assess the quality of reactivity, we create a reaching task with both static and dynamic obstacles. The goal is for the policies to reach 19 randomly generated end-effector targets (with one target issued every 5 seconds) while avoiding collision with both static and dynamic obstacles 
\compressedversiononly
(see Fig. \ref{fig:experiments} (right)). 
\else
(see Fig. \ref{fig:dynamic_navigation}). 
\fi
The policies are only aware of the object positions while objects achieve speeds of up to $0.39$ $m/s$. With the geometric fabric policy, the robot reached 16 targets whereas the robot with the RMP reached 11 targets. Moreover, the geometric fabric policy yielded a $0.4 \%$ collision rate in contrast to the RMP's $8.18 \%$,  where collision rate is the percentage of time steps in collision. Since the geometric fabric obstacle avoidance component is predominantly geometric in nature, it can simultaneously be highly influential while not preventing optimization of the end-effector potential, resulting in higher target acquisition rates. The forcing component of the geometric fabric policy aggressively engages only when the robot is close to collision. Altogether, this allows strong obstacle avoidance behavior both close to and far from objects in a way that minimally impedes target acquisition. In contrast, the RMP had to strike a balance between forcing repulsion without impairing the ability to acquire end-effector targets too much. This compromise resulted in diminished performance.

\section{Conclusion}
Geometric fabrics naturally generalize classical mechanical systems to form a new physics of behavior. These systems provide important design expressivity while maintaining the theoretical properties that make classical systems powerful in control. Many existing techniques in the literature can naturally leverage fabrics (e.g., reinforcement learning as in \cite{LiRmp2RobotLearning2021}, imitation learning, planning), and it is a point of future work to explore many of these areas. Our theoretical and experimental analyses already demonstrate their utility but we believe the bulk of their application has yet to come.
\compressedversiononly
For extended discussions, proofs, and additional exhibitions of geometry fabrics (including full pose control and equality constraint handling), see the extended version \cite{Wyk2021GeometriFabricsPhysicsArXivAbbrev}, video, and website (\url{https://sites.google.com/nvidia.com/geometric-fabrics-behavior}).
\else % full version
% Note: this comment has been moved to the intro in the full version of the paper.
% See the appendix and our website\footnote{\url{https://sites.google.com/nvidia.com/geometric-fabrics-behavior}} for extended discussion, proofs, and additional experimental demonstrations.
\fi

%\section{Discussion\karl{I suspect this will probably just need to be a conclusion due to space limitations. This is probably fine if there is adequate discussion in the various sections above.}}

% The content below this should be integrated up. Above is a rewrite.

{
%\compressedversiononly
\small
%\fi
\bibliographystyle{IEEEtran}
\bibliography{refs}

% Generated by IEEEtran.bst, version: 1.14 (2015/08/26)
\begin{thebibliography}{10}
\providecommand{\url}[1]{#1}
\csname url@samestyle\endcsname
\providecommand{\newblock}{\relax}
\providecommand{\bibinfo}[2]{#2}
\providecommand{\BIBentrySTDinterwordspacing}{\spaceskip=0pt\relax}
\providecommand{\BIBentryALTinterwordstretchfactor}{4}
\providecommand{\BIBentryALTinterwordspacing}{\spaceskip=\fontdimen2\font plus
\BIBentryALTinterwordstretchfactor\fontdimen3\font minus
  \fontdimen4\font\relax}
\providecommand{\BIBforeignlanguage}[2]{{%
\expandafter\ifx\csname l@#1\endcsname\relax
\typeout{** WARNING: IEEEtran.bst: No hyphenation pattern has been}%
\typeout{** loaded for the language `#1'. Using the pattern for}%
\typeout{** the default language instead.}%
\else
\language=\csname l@#1\endcsname
\fi
#2}}
\providecommand{\BIBdecl}{\relax}
\BIBdecl

\bibitem{ClassicalMechanicsTaylor05}
J.~R. Taylor, \emph{Classical Mechanics}.\hskip 1em plus 0.5em minus
  0.4em\relax University Science Books, 2005.

\bibitem{khatib1987unified}
O.~Khatib, ``A unified approach for motion and force control of robot
  manipulators: The operational space formulation,'' \emph{IEEE Journal on
  Robotics and Automation}, vol.~3, no.~1, pp. 43--53, 1987.

\bibitem{KhatibPotentialFields1985}
------, ``Real-time obstacle avoidance for manipulators and mobile robots,'' in
  \emph{IEEE International Conference on Robotics and Automation (ICRA)},
  vol.~2, Mar 1985, pp. 500--505.

\bibitem{bullo2004geometric}
F.~Bullo and A.~D. Lewis, \emph{Geometric control of mechanical systems:
  modeling, analysis, and design for simple mechanical control systems}.\hskip
  1em plus 0.5em minus 0.4em\relax Springer Science \& Business Media, 2004,
  vol.~49.

\bibitem{LeonardSusskindClassicalMechanics}
\BIBentryALTinterwordspacing
L.~Susskind, \emph{The Theoretical Minimum: Classical Mechanics}.\hskip 1em
  plus 0.5em minus 0.4em\relax Stanford: Continuing Studies, 2011. [Online].
  Available:
  \url{http://theoreticalminimum.com/courses/classical-mechanics/2011/fall}
\BIBentrySTDinterwordspacing

\bibitem{khalil2002nonlinear}
H.~Khalil, \emph{Nonlinear Systems}, ser. Pearson Education.\hskip 1em plus
  0.5em minus 0.4em\relax Prentice Hall, 2002.

\bibitem{ratliff2021geometry}
N.~Ratliff, K.~Van~Wyk, M.~Xie, A.~Li, and M.~Rana, ``Generalized nonlinear and
  finsler geometry for robotics,'' in \emph{2021 IEEE International Conference
  on Robotics and Automation (ICRA)}.\hskip 1em plus 0.5em minus 0.4em\relax
  IEEE, 2021.

\bibitem{cheng2018rmpflow}
C.-A. Cheng, M.~Mukadam, J.~Issac, S.~Birchfield, D.~Fox, B.~Boots, and
  N.~Ratliff, ``{RMP}flow: A computational graph for automatic motion policy
  generation,'' in \emph{The 13th Intl. Workshop on the Algorithmic Foundations
  of Robotics}, 2018.

\bibitem{Peters_AR_2008}
J.~Peters, M.~Mistry, F.~Udwadia, J.~Nakanishi, and S.~Schaal, ``A unifying
  framework for robot control with redundant {DOF}s,'' \emph{Autonomous
  Robots}, vol.~24, no.~1, pp. 1--12, 2008.

\bibitem{IjspeertDMPs2013}
A.~J. Ijspeert, J.~Nakanishi, H.~Hoffmann, P.~Pastor, and S.~Schaal,
  ``Dynamical movement primitives: Learning attractor models for motor
  behaviors,'' \emph{Neural Computation}, vol.~25, no.~2, pp. 328--373, Feb
  2013.

\bibitem{OptimalControlEstimationStengel94}
R.~Stengel, \emph{Optimal Control and Estimation}.\hskip 1em plus 0.5em minus
  0.4em\relax Dover, New York, 1994.

\bibitem{BylardBonalliEtAl2021}
\BIBentryALTinterwordspacing
A.~Bylard, R.~Bonalli, and M.~Pavone, ``Composable geometric motion policies
  using multi-task pullback bundle dynamical systems,'' in \emph{{Proc. IEEE
  Conf. on Robotics and Automation}}, Xi'an, China, may 2021. [Online].
  Available: \url{https://arxiv.org/abs/2101.01297}
\BIBentrySTDinterwordspacing

\bibitem{Li2019LyapunovRmps}
A.~Li, C.~Cheng, B.~Boots, and M.~Egerstedt, ``Stable, concurrent controller
  composition for multi-objective robotic tasks,'' in \emph{IEEE Conference on
  Decision and Control (CDC)}, 2019.

\bibitem{tommasino2017task}
P.~Tommasino and D.~Campolo, ``Task-space separation principle: a force-field
  approach to motion planning for redundant manipulators,''
  \emph{Bioinspiration \& biomimetics}, vol.~12, no.~2, p. 026003, 2017.

\bibitem{tiseo2021passive}
C.~Tiseo, V.~Ivan, W.~Merkt, I.~Havoutis, M.~Mistry, and S.~Vijayakumar, ``A
  passive navigation planning algorithm for collision-free control of mobile
  robots,'' in \emph{2021 IEEE International Conference on Robotics and
  Automation (ICRA)}.\hskip 1em plus 0.5em minus 0.4em\relax IEEE, 2021, pp.
  8223--8229.

\bibitem{urain2021composable}
J.~Urain, A.~Li, P.~Liu, C.~D'Eramo, and J.~Peters, ``Composable energy
  policies for reactive motion generation and reinforcement learning,''
  \emph{arXiv preprint arXiv:2105.04962}, 2021.

\bibitem{sentis2006whole}
L.~Sentis and O.~Khatib, ``A whole-body control framework for humanoids
  operating in human environments,'' in \emph{IEEE International Conference on
  Robotics and Automation (ICRA)}, 2006, pp. 2641--2648.

\bibitem{dietrich2018HierarchicalOpSpaceControl}
A.~Dietrich, C.~Ott, and J.~Park, ``The hierarchical operational space
  formulation: Stability analysis for the regulation case,'' \emph{IEEE
  Robotics and Automation Letters}, vol.~3, no.~2, pp. 1120--1127, 2018.

\bibitem{bucataru2007FinslerLagrangeDynamics}
I.~Bucataru and R.~Miron, \emph{Finsler-Lagrange Geometry: Applications to
  dynamical systems}.\hskip 1em plus 0.5em minus 0.4em\relax Editura Academiei
  Romane, 2007.

\bibitem{optimizationFabricsForBehavioralDesignArXiv2020}
N.~D. Ratliff, K.~V. Wyk, M.~Xie, A.~Li, and A.~M. Rana, ``Optimization fabrics
  for behavioral design,'' \emph{arXiv:2010.15676 [cs.RO]}, 2020.

\bibitem{ratliff2018rmps}
N.~D. Ratliff, J.~Issac, D.~Kappler, S.~Birchfield, and D.~Fox, ``Riemannian
  motion policies,'' \emph{arXiv:1801.02854}, 2018.

\bibitem{LiRmp2RobotLearning2021}
A.~Li, C.-A. Cheng, M.~A. Rana, M.~X. K.~V. Wyk, N.~Ratliff, and B.~Boots,
  ``{RMP2}: A structured composable policy class for robot learning,'' in
  \emph{{Robotics: Science and Systems (RSS)}}, 2021.

\bibitem{LeeSmoothManifolds2012}
J.~Lee, \emph{Introduction to Smooth Manifolds}, 2nd~ed.\hskip 1em plus 0.5em
  minus 0.4em\relax Springer, 2012.

\bibitem{udwadia1996analytical}
F.~E. Udwadia and R.~E. Kalaba, \emph{Analytical Dynamics: A New
  Approach}.\hskip 1em plus 0.5em minus 0.4em\relax Cambridge University Press,
  1996.

\bibitem{LeeRiemannianManifolds97}
J.~M. Lee, \emph{Riemannian Manifolds: An Introduction To Curvature}.\hskip 1em
  plus 0.5em minus 0.4em\relax Springer, 1997.

\bibitem{flores2011parametric}
P.~Flores, M.~Machado, E.~Seabra, and M.~Tavares~da Silva, ``A parametric study
  on the baumgarte stabilization method for forward dynamics of constrained
  multibody systems,'' \emph{Journal of computational and nonlinear dynamics},
  vol.~6, no.~1, 2011.

\bibitem{stewart2000implicit}
D.~Stewart and J.~C. Trinkle, ``An implicit time-stepping scheme for rigid body
  dynamics with coulomb friction,'' in \emph{Proceedings 2000 ICRA. Millennium
  Conference. IEEE International Conference on Robotics and Automation.
  Symposia Proceedings (Cat. No. 00CH37065)}, vol.~1.\hskip 1em plus 0.5em
  minus 0.4em\relax IEEE, 2000, pp. 162--169.

\bibitem{geometricFabricsForMotionArXiv2020}
\BIBentryALTinterwordspacing
M.~Xie, K.~V. Wyk, A.~Li, M.~A. Rana, D.~Fox, B.~Boots, and N.~Ratliff,
  ``Geometric fabrics for the acceleration based design of robotic motion,''
  \emph{arXiv:2010.14750 [cs.RO]}, 2020. [Online]. Available:
  \url{https://arxiv.org/abs/2010.14750}
\BIBentrySTDinterwordspacing

\bibitem{xie2021imitation}
M.~Xie, A.~Li, K.~V. Wyk, F.~Dellaert, B.~Boots, and N.~Ratliff, ``Imitation
  learning via simultaneous optimization of policies and auxiliary
  trajectories,'' 2021.

\end{thebibliography}
}

% \clearpage

% Note the appendix needs to remain even in the compressed version to get the references right.
% \appendix
\begin{appendices}
\title{Appendices}
\date{}
\maketitle

\section{Preliminaries}
\subsection{Derivations on manifolds}
\label{apx:Manifolds}

Nonlinear geometry is most commonly developed in the mathematics literature in terms of smooth manifolds, and often in an abstract coordinate-free notation which can be challenging for practitioners. Even tensor notation can be a barrier to entry for readers unfamiliar with the algebra. To make our presentation more accessible, we will stick to coordinate descriptions and simpler vector notations from advanced calculus (see Appendix~\ref{apx:CalculusNotation}) similar to many books on classical mechanics aimed at an engineering audience. 

Manifolds will be formally defined in the traditional way (see \cite{LeeSmoothManifolds2012} for a complete introduction), but for our purposes, we will consider them simply as $d$-dimensional spaces $\mathcal{Q}$ with elements identified with $\q\in\R^d$ in $d$ generalized coordinates, and when we say $\q\in\mathcal{Q}$ we mean $\q$ is the coordinate representation of the manifold point described in our chosen coordinates. Often we implicitly assume a system evolves over time $t$ in a trajectory $\q(t)$ with velocity $\qd = \frac{d\q}{dt}$, and say that coordinate velocity vector $\qd$ is an element of the tangent space $\mathcal{T}_\x\mathcal{Q}$, which can be thought of as the space of all velocity vectors that can be generated in that way. 
When discussing general tangent space vectors (elements of the tangent space without reference to a specific trajectory), we often use the notation $\vv\in\mathcal{T}_\q\mathcal{Q}$.

\subsection{A note on covariance and transform trees}
\label{apx:NoteOnCovariance}

Geometric consistency across changes of coordinates on a manifold is an important concept since many manifolds cannot be described globally by a single set of coordinates. Algorithms and systems should not change their fundamental behavior on the manifold based on the coordinate system they're described in. When an algorithm or system has such a coordinate independence, it's called {\em covariant}.

It is well known that the Euler-Lagrange equation is covariant to changes of coordinate \cite{LeonardSusskindClassicalMechanics}, so anything derived here from the Euler-Lagrange equation is automatically covariant. For everything else, such as HD2 geometries and bending terms, we are not rigorous with developing specific intrinsically covariant subclasses. Such covariant subclasses exist and can be derived, but the mathematics behind that can be complicated and unnecessarily restrictive. In practice, we instead use transform trees to derive covariant systems.

A transform tree is simply a tree whose nodes are different spaces and whose edges are the maps linking those spaces. For instance, the simplest tree is a single branch $\phi:\mathcal{Q}\rightarrow\mathcal{X}$ mapping generalized coordinates $\mathcal{Q}\subset\R^d$ to a single space $\mathcal{X}\subset\R^n$. In fact, all trees can be expressed in this form since paths from the root to any given node are unique and the maps encountered along that path can be combined into a single composed map.
The single branch tree is constructed by stacking all nodes into a single space and defining a map by stacking these unique path-composed maps into a single map. In practice, more expressive tree structures are important for computational performance, but conceptually it suffices to think of simply a single link $\phi$ as described here. (Note that $n\geq d$ when the tree is well-formed so that policies on the nodes can fully constrain the space.)

Equation~\ref{eqn:PullbackCommutesWithEL} shows that the pullback operation defined on a transform tree is compatible with the pullback implicit in the covariant Euler-Lagrange equation. This is because pullback is compatible with the definition of covariance. Calculationally, we prove that a system is covariant if, under a change of coordinates, the resulting system is simply scaled by the Jacobian (transpose) of the change of coordinates map. For instance, if $\phi$ denotes a change of coordinates (i.e. when $n = d$ and $\phi$ is smooth and invertible making it what is known as a {\em diffeomorphism}), a term $\h(\x,\xd, \xdd)$ is covariance if its expression in $\mathcal{Q}$ amounts to $\wt{\h}(\q,\qd,\qdd) = \J^\tr \h(\x,\xd,\xdd)$ where $\J = \partial\phi$ is the Jacobian of $\phi$ and $\Jd$ is its time derivative (and $\x = \phi(\q), \xd = \J\qd, \xdd = \J\qdd + \Jd\qd$). The utility of this definition of covariance is that if $\wt{\h}(\q,\qd, \qdd) = \zero$ then in $\mathcal{Q}$ we get the same 
\begin{align}
    &\wt{\h}(\q,\qd,\qdd) =\zero \\\nonumber
    &\ \ \Rightarrow  \J^\tr\h(\x,\xd,\xdd) = \zero \\\nonumber
    &\ \ \Rightarrow \h(\x,\xd,\xdd) = \zero
\end{align}
since $\J$ is invertible when $\phi$ is a change of coordinates. In other words, covariant equations hold in all coordinate systems.

Now, looking at Equation~\ref{eqn:SpecPullback} and its derivation leading to Equation~\ref{eqn:PullbackDerived}, we see that when $\phi$ is a change of coordinates, the system transforms covariantly by definition. Specifically, when the term is $\h(\x,\xd,\xdd) = \M(\x,\xd)\xdd + \f(\x,\xd)$, the pullback operation computes $(\J^\tr\M\J\big)\qdd + \big(\f + \M\Jd\qd\big) = \J^\tr\big(\M\xdd + \f\big) = \J\,\h(\x,\xd,\xdd)$ (we're just reversing the argument used to derive it).
Therefore, if we have any arbitrary transform tree $\phi:\mathcal{Q}\rightarrow\mathcal{X}$ (viewed as a single branch) and we want to change the coordinates using $\zeta:\wt{\mathcal{Q}}\rightarrow\mathcal{Q}$, we can simply build a transform tree linking these transforms
\begin{align}
    \wt{\mathcal{Q}}\stackrel{\zeta}{\longrightarrow} \mathcal{Q}
    \stackrel{\phi}{\longrightarrow}\mathcal{X}
\end{align}
and use standard pullback operations to get covariance automatically. For instance, if $\mathcal{Q}$ is the generalized coordinates of the robot, we can define a transform tree with $\mathcal{Q}$ at the root and a collection of task spaces extending from it such as the end-effector space, body points, distances to obstacles, etc. Then, if we change to a different coordinate representation of the generalized coordinates we simply express that as part of the transform tree and the fundamental behavior of our system does not change because of the natural covariance of the transform tree itself. An important example is when the generalized coordinates includes a free flying SE3 base such as in a drone with an SO3 rotation component. SO3 cannot be globally expressed in a single coordinate system, so we inevitably need to change coordinates at times. (We can also change representations, between Euler angles, quaternions, axis-angle and matrix representations, as long as the transform is represented as part of the tree.) Covariance ensures that the system behavior remains consistent under these representational transformations.

Another way to understand the transform tree is that it represents an optimization problem. Pullback can equivalently be viewed as the solution to a least squares optimization problem
\begin{align}
    &\min_\qdd \frac{1}{2}\|\xdd^d - \xdd\|_\M^2 \\\nonumber
    &\mathrm{s.t.}\ \xdd = \J\qdd + \Jd\qd,
\end{align}
analogous to the least squares formulation of classical mechanics under Gauss's Principle of Least Constraint \cite{udwadia1996analytical}. Because it's an optimality principle, the fundamental minimum does not change under changes of coordinates.

%\fullversiononly
\subsection{Multivariate calculus notation}
\label{apx:CalculusNotation}

We avoid tensor notation here whenever possible in favor of the more compact and familiar matrix-vector notation of advanced calculus.
 
We assume each partial derivative generates a new index ranging over the partials. The partial $\partial_\x f(\x)$ of a function $f:\R^n\rightarrow\R$ is treated as a column vector. And if $\g:\R^n\rightarrow\R^m$ is a map, $\g(\x)$ itself is treated as a column vector and $\partial_\x \g(\x) \in \R^{m\times n}$ denotes a matrix with partials ranging over the columns. When there are multiple partials as in $\partial_{\x\y}h(\x,\y)$, we apply the partials in the order listed. In this case, we apply $\x$ first to create a column vector $\partial_\x h(\x,\y)$ then apply $\y$ to create a matrix $\partial_\y\big(\partial_\x h\big)$. 
When more partials are involved, the resulting object becomes a higher-order tensor, but we use the convention that multiplication on the right by a matrix or vector contracts the indices in the reverse order they were generated. For instance, if $b(\x,\y,\z)$ is a function, $\partial_{\x\y\z}b$ is a three index tensor, but $\partial_{\x\y\z}b \vv$ will unambiguously contract along the $\z$ partial first leaving a matrix whose rows align with the partials along $\x$ and whose columns align with the partials along $\y$. Specifically, $\partial_{\x\y\z}b \vv = \sum_k \frac{\partial}{\partial z^k}\big(\partial_{\x\y}b\big) v^k$. This convention is applied recursively when more than three variables are involved.

\section{Additional comments on related work}

\subsection{Generalization of Dynamic Movement Primitive (DMP) principles}
\label{sec:GeneralizingDmps}

In the Related Work section (see Section~\ref{sec:RelatedWork}), we discussed how Dynamic Movement Primitives (DMPs) are perturbations from a stable second-order differential equation and, in principle, can easily be generalized to leverage stable differential equations beyond simple linear equations. We describe that connection in more detail here. Studying these generalized perturbed systems is an area of future work, but we note here that geometric fabrics are not inherently competitive with the principles of DMPs, but rather complimentary.

A DMP is a nonlinear force perturbation away from an intrinsically stable linear second-order differential equation \cite{IjspeertDMPs2013}. When the force perturbation vanishes over time the system dynamics is asymptotically governed by the linear differential equation and therefore becomes a point attractor; likewise, when the nonlinear force is periodic the system converges to a stable limit cycle. In all cases, the key innovation is perturbing a stable differential equation using either a vanishing or cyclic perturbation so that stability of the combined system is governed by the stability of the underlying differential equation. 

These ideas can easily be generalized to geometric mechanics noting that inherently stable classical mechanical systems can be used as a drop-in replacement for the linear second-order differential equation to inherit stability. Since geometric fabrics are a generalization of those classical mechanical systems, the same applies for fabrics. Geometric fabrics can act as a stable underlying nonlinear second-order differential equation that we can perturb away from using a trained perturbation component generalizing the techniques studies in DMPs while gaining more expressive underlying system behaviors from this broader class of geometric fabric equations. 

\subsection{Geometric Dynamical Systems vs Finsler Systems}
\label{apx:Gds}

Geometric Dynamical Systems (GDSs) were introduced in \cite{cheng2018rmpflow} as a stable class of RMPs for stable system design. But we now understand GDSs are more elegantly and concretely expressed as Finsler systems characterized in Subsection~\ref{sec:FinslerAndHd2Geometries}. Moreover, such systems are inherently covariant since they derive from the Euler-Lagrange equations. This section presents that connection in detail.

We start by reviewing Geometric Dynamical Systems (GDS) and express them in a (tensor notion) form that makes it easier to see the relationship to Lagrangian and Finsler systems. Let $\G(\q,\qd)$ be a velocity dependent metric tensor and define an energy function as $\mathcal{K}(\q,\qd) = \frac{1}{2}\qd^\tr\G(\q,\qd)\qd$. Following the notation of \cite{cheng2018rmpflow}, we denote the columns of $\G$ by $\g_i$ so that $\G = [\g_1,\cdots,\g_d]$. An unforced GDS is defined as
\begin{align*}
    \M(\q,\qd) \qdd + \bm{\xi}(\q,\qd) = \zero.
\end{align*}
where $\M = \G + \bm{\Xi}$ has additional mass defined by $\bm{\Xi} = \frac{1}{2}\sum_i\dot{q}^i\partial_\qd\g_i$ (here $\dot{q}^i$ denotes the $i$th component of $\qd$) and $\bm{\xi} = \mathring{\G}\qd - \frac{1}{2}\partial_\q\big(\qd^\tr\G\qd\big)$ using $\mathring{\G} = [\h_1,\cdots,\h_d]$ with $\h_i = \big(\partial_\q\g_i\big)\qd$. When $\G$ is independent of $\qd$, these equations of motion reduce to the classical equations since $\M\rightarrow\G$ and $\mathring{\G} \rightarrow \dot{\G}$.

GDS systems conserve total energy $\mathcal{E} = \mathcal{K} + \psi(\q)$ when forced by a potential function $\psi(\q)$ and are shown to optimize $\psi$ when energy is bled off using a damping term $-\B(\q,\qd)\qd$ for positive definite $\B$ similar to the setting outlined in Theorem~\ref{thm:GeometricFabricStability}.

Using tensor notation with the Einstein summation convention, this system can be expressed $M_{kl}\ddot{q}^l + \xi_{kij}\dot{q}^i\dot{q}^j = 0$ where $M_{kl} = G_{kl} + \Xi_{kl}$ with
\begin{align} 
    \label{eqn:GdsEinstein1}
    \Xi_{kl} &= \frac{1}{2}\frac{\partial G_{ik}}{\partial \dot{q}^l}\dot{q}^i \\
    \label{eqn:GdsEinstein2}
    \xi_{kij} &= \frac{1}{2} \left(
                \frac{\partial G_{kj}}{\partial q^i}
                + \frac{\partial G_{ik}}{\partial q^j}
                - \frac{\partial G_{ij}}{\partial q^k}
            \right),
\end{align}
where $\xi_{kij}$ are the standard Christoffel symbols of the first kind \cite{LeeRiemannianManifolds97}. The only augmentation to the classical system is the term $\Xi$ which vanishes when $\G$ is independent of velocity.

A GDS by itself is not covariant to change of coordinates \cite{BylardBonalliEtAl2021}, but can be made covariant with the use of transform trees (see {\em structured GDS} in \cite{cheng2018rmpflow} and the discussion above in Appendix~\ref{apx:NoteOnCovariance}).
However, the next theorem shows that Lagrangian systems, and specifically Finsler systems derived from energies of the form $\Lag_e(\q,\qd) = \frac{1}{2}\qd^\tr\G(\q,\qd)\qd$, make a qualitatively similar, but inherently covariant, alternative to GDS systems.

\begin{theorem} \label{thm:LagrangianEquivalent}
    Let $\mathcal{L} = \frac{1}{2}G_{ij}\dot{q}^i\dot{q}^j$
    be a Finsler energy with $G_{ij}(q,\dot{q})$ a function of both position 
    and velocity. Then the Finsler system's equations of motion (defined by the Euler-Lagrange equation) can be expressed
    \begin{align}
        M_{kl}\ddot{q}^l + \Upsilon_{kij} \dot{q}^i\dot{q}^j
        = 0,
    \end{align}
    where $M_{kl} = G_{kl} + \Xi_{kl}$ with velocity curvature mass
    \begin{align}
        \Xi_{kl} = 
            \frac{\partial G_{lj}}{\partial \dot{q}^k}\dot{q}^j
            + \frac{\partial G_{kj}}{\partial \dot{q}^l}\dot{q}^j
            + \frac{1}{2}\frac{\partial^2 G_{ij}}{\partial \dot{q}^l\partial \dot{q}^k}\dot{q}^i\dot{q}^j,
    \end{align}
    and velocity-augmented Christoffel symbols (of the first kind) given by
    \begin{align}
        \Upsilon_{kij} = 
            \frac{1}{2} \left(
                \frac{\partial G_{kj}}{\partial q^i}
                + \frac{\partial G_{ik}}{\partial q^j}
                - \frac{\partial G_{ij}}{\partial q^k}
                + \frac{\partial^2 G_{ij}}{\partial q^l\partial \dot{q}^k}\dot{q}^l
            \right).
    \end{align}
\end{theorem}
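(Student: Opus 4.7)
The plan is to prove this by direct computation of the Euler--Lagrange equation $\frac{d}{dt}\partial_{\dot{q}^k}\mathcal{L}-\partial_{q^k}\mathcal{L}=0$ on the Finsler energy $\mathcal{L}=\tfrac{1}{2}G_{ij}(q,\dot{q})\dot{q}^i\dot{q}^j$, keeping scrupulous track of the fact that $G_{ij}$ depends on both $q$ and $\dot{q}$. Without loss of generality I will assume $G_{ij}=G_{ji}$, since the Lagrangian depends only on the symmetrized part and any skew part drops out of the product with $\dot{q}^i\dot{q}^j$. The only nontrivial calculational ingredients are the product rule, the chain rule for time derivatives (recalling $\frac{d}{dt}G_{ij}=\partial_{q^l}G_{ij}\,\dot{q}^l+\partial_{\dot{q}^l}G_{ij}\,\ddot{q}^l$), and systematic relabelling of dummy indices.

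First I would compute the canonical momentum
\begin{equation*}
\frac{\partial\mathcal{L}}{\partial \dot{q}^k}=G_{kj}\dot{q}^j+\tfrac{1}{2}\frac{\partial G_{ij}}{\partial \dot{q}^k}\dot{q}^i\dot{q}^j,
\end{equation*}
using symmetry of $G$ to collapse the two linear-in-$\dot{q}$ contributions into $G_{kj}\dot{q}^j$. Next I would take $d/dt$ of this expression, producing a sum of $\ddot{q}$-linear terms (from differentiating the explicit $\dot{q}^j$, from the $\dot{q}$-dependence of $G_{kj}$, and from the $\dot{q}$-dependence of $\partial_{\dot{q}^k}G_{ij}$) together with terms quadratic in $\dot{q}$ that arise from the $q$-dependence of the same factors. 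Finally I would subtract $\partial_{q^k}\mathcal{L}=\tfrac{1}{2}\partial_{q^k}G_{ij}\,\dot{q}^i\dot{q}^j$.

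The last step is purely bookkeeping: I partition the resulting expression into an $\ddot{q}^l$-linear piece and a $\dot{q}^i\dot{q}^j$-quadratic piece. Collecting the coefficient of $\ddot{q}^l$ yields exactly $G_{kl}+\Xi_{kl}$ with
\begin{equation*}
\Xi_{kl}=\frac{\partial G_{lj}}{\partial\dot{q}^k}\dot{q}^j+\frac{\partial G_{kj}}{\partial\dot{q}^l}\dot{q}^j+\tfrac{1}{2}\frac{\partial^2 G_{ij}}{\partial\dot{q}^l\partial\dot{q}^k}\dot{q}^i\dot{q}^j,
\end{equation*}
once I use $G$'s symmetry to merge the two contributions $\tfrac{1}{2}\partial_{\dot{q}^k}G_{ij}\,\ddot{q}^i\dot{q}^j+\tfrac{1}{2}\partial_{\dot{q}^k}G_{ij}\,\dot{q}^i\ddot{q}^j$. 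For the $\dot{q}\dot{q}$ piece I start from $\partial_{q^l}G_{kj}\,\dot{q}^l\dot{q}^j-\tfrac{1}{2}\partial_{q^k}G_{ij}\,\dot{q}^i\dot{q}^j+\tfrac{1}{2}\partial^2_{q^l\dot{q}^k}G_{ij}\,\dot{q}^l\dot{q}^i\dot{q}^j$, then symmetrize the first term in the free indices $i,j$ using $\partial_{q^l}G_{kj}\,\dot{q}^l\dot{q}^j=\tfrac{1}{2}\big(\partial_{q^i}G_{kj}+\partial_{q^j}G_{ik}\big)\dot{q}^i\dot{q}^j$, which is precisely the Christoffel-like combination appearing in $\Upsilon_{kij}$.

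The only real obstacle is careful index management: the same partial derivative of $G$ can appear in several terms that look different before dummy relabelling, and one must invoke the symmetry $G_{ij}=G_{ji}$ at the right moments to fuse the two derivative-of-the-momentum contributions into the symmetric form given in the statement. Once those relabellings are done the identification with $M_{kl}$ and $\Upsilon_{kij}$ is immediate, which completes the proof.
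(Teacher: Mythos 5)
Your proposal is correct and follows exactly the same route as the paper's proof: directly expand $\tfrac{d}{dt}\partial_{\dot q^k}\mathcal{L}-\partial_{q^k}\mathcal{L}$, separate the $\ddot q^l$-linear and $\dot q^i\dot q^j$-quadratic pieces, and use the symmetry $G_{ij}=G_{ji}$ together with dummy-index relabelling to arrive at $\Xi_{kl}$ and $\Upsilon_{kij}$. You correctly identify all three sources of $\ddot q$-terms (explicit $\dot q^j$, $\dot q$-dependence of $G_{kj}$, and $\dot q$-dependence of $\partial_{\dot q^k}G_{ij}$) and the symmetrization $\partial_{q^i}G_{kj}\dot q^i\dot q^j=\tfrac{1}{2}(\partial_{q^i}G_{kj}+\partial_{q^j}G_{ik})\dot q^i\dot q^j$, which are precisely the two places where the paper invokes $G$'s symmetry.
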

\begin{proof}
    This proof amounts primarily to calculation and gathering together the
    terms appropriately to construct $M_{kl}$ and $\Upsilon_{kij}$.

    The Lagrangian is $\mathcal{L} = \frac{1}{2}G_{ij}\dot{q}^i\dot{q}^j$
    and the Euler-Lagrange equation defines 
    $\frac{d}{dt}\frac{\partial \mathcal{L}}{\partial \dot{q}^k} 
    - \frac{\partial\mathcal{L}}{\partial q^k} = 0$. That gives
    \begin{align*}
        \underbrace{\frac{d}{dt}\left[\frac{\partial}{\partial \dot{q}^k}\left(
            \frac{1}{2} G_{ij} \dot{q}^i\dot{q}^j
        \right)\right]}_{\mathrm{term 1}}
        - \frac{1}{2}\frac{\partial G_{ij}}{\partial q^k}\dot{q}^j\dot{q}^i
        = 0.
    \end{align*}
    Term 1 becomes
    \begin{align*}
    &\frac{d}{dt}\left[\frac{\partial}{\partial \dot{q}^k}\left(
        \frac{1}{2} G_{ij} \dot{q}^i\dot{q}^j
    \right)\right]
    = 
    \frac{d}{dt}\left[
        \frac{1}{2}\frac{\partial G_{ij}}{\partial \dot{q}^k} \dot{q}^i\dot{q}^j
        + G_{kj}\dot{q}^j
    \right] \\
    &\ \ \ = \frac{1}{2}\left(
                \frac{\partial^2G_{ij}}{\partial q^l\partial \dot{q}^k}\dot{q}^l
                + \frac{\partial^2G_{ij}}{\partial \dot{q}^l\partial \dot{q}^k}\ddot{q}^l
            \right)\dot{q}^i\dot{q}^j
            + \frac{1}{2}\frac{\partial G_{ij}}{\partial \dot{q}^k} \ddot{q}^i\dot{q}^j \\
        &\ \ \ \ \ + \frac{1}{2}\frac{\partial G_{ij}}{\partial \dot{q}^k} \dot{q}^i\ddot{q}^j
        + \left(\frac{\partial G_{kj}}{\partial q^l}\dot{q}^l + \frac{\partial G_{kj}}{\partial \dot{q}^l}\ddot{q}^l\right)\dot{q}^j
        + G_{kj} \ddot{q}^j.
    \end{align*}
    Putting this together with the other terms and grouping them as needed,
    we get
    \begin{align*}
        &\frac{d}{dt}\frac{\partial \mathcal{L}}{\partial \dot{q}^k} 
            - \frac{\partial\mathcal{L}}{\partial q^k}
        = \Bigg(
            G_{kl}
        + \frac{1}{2}\left[
                \frac{\partial G_{lj}}{\partial \dot{q}^k} \dot{q}^j
                + \frac{\partial G_{il}}{\partial \dot{q}^k} \dot{q}^i
            \right]
            + \frac{\partial G_{kj}}{\partial \dot{q}^l} \dot{q}^j \\
        &\ \ \ \ \ \ \ \ \ \ \ \ \ \ \ \ \ \ \ \ \ \ \ \ \ \ \ \,
        + \frac{1}{2}\frac{\partial^2 G_{ij}}{\partial \dot{q}^l\dot{q}^k}
                \dot{q}^i\dot{q}^j
            \Bigg) \ddot{q}^l \\
        &\ \ \ \ \ \ + \left(
            \frac{1}{2}\frac{\partial^2G_{ij}}{\partial q^l\partial \dot{q}^k}\dot{q}^l
            + \frac{\partial G_{kj}}{\partial q^i}
            - \frac{1}{2} \frac{\partial G_{ij}}{\partial q^k}
        \right) \dot{q}^i\dot{q}^j.
    \end{align*}
    Since $G_{ij}$ is symmetric, we have 
    $\frac{\partial G_{lj}}{\partial \dot{q}^k} \dot{q}^j 
    = \frac{\partial G_{il}}{\partial \dot{q}^k} \dot{q}^i$, and 
    \begin{align*}
        \frac{\partial G_{kj}}{\partial q^i}\dot{q}^i\dot{q}^j
        &= 
        \frac{1}{2}\frac{\partial G_{kj}}{\partial q^i}\dot{q}^i\dot{q}^j
        + \frac{1}{2}\frac{\partial G_{jk}}{\partial q^i}\dot{q}^i\dot{q}^j \\
        &= \frac{1}{2}\left(
            \frac{\partial G_{kj}}{\partial q^i}
            + \frac{\partial G_{ik}}{\partial q^j}
        \right)\dot{q}^i\dot{q}^j,
    \end{align*}
    so we can rewrite the above expression in its final form
    \begin{align*}
        &\frac{d}{dt}\frac{\partial \mathcal{L}}{\partial \dot{q}^k} 
            - \frac{\partial\mathcal{L}}{\partial q^k} \\
        &\ \ \ = \left(
            G_{kl} 
            + \frac{\partial G_{lj}}{\partial \dot{q}^k} \dot{q}^j
            + \frac{\partial G_{kj}}{\partial \dot{q}^l} \dot{q}^j
            + \frac{1}{2}\frac{\partial^2 G_{ij}}{\partial \dot{q}^l\dot{q}^k}
                \dot{q}^i\dot{q}^j
            \right) \ddot{q}^l \\
        &\ \ \ \ \ \ + \frac{1}{2}\left(
            \frac{\partial^2G_{ij}}{\partial q^l\partial \dot{q}^k}\dot{q}^l
            + \frac{\partial G_{kj}}{\partial q^i}
            + \frac{\partial G_{ik}}{\partial q^j}
            - \frac{\partial G_{ij}}{\partial q^k}
        \right) \dot{q}^i\dot{q}^j \\
        &\ \ = \Big(G_{kl} + \Xi_{kl}\Big)\ddot{q}^l
            + \Upsilon_{kij}\dot{q}^i\dot{q}^j,
    \end{align*}
    with $\Xi_{kl}$ and $\Upsilon_{kij}$ as given in the above theorem.
\end{proof}
Comparing the terms of Theorem~\ref{thm:LagrangianEquivalent} to Equations~\ref{eqn:GdsEinstein1} and \ref{eqn:GdsEinstein2} we see that Finsler systems of this form contains additional terms deriving from the velocity dependence in both the added mass and the augmented Christoffel symbols. Since these equations are derived using the Euler-Lagrange equation, they're known to be covariant to reparameterization; the added terms are critical for that covariance property to hold. Moreover, the class of all Finsler systems is a superset of these GDS-like Finsler systems, so Finsler systems broadly have many advantages over GDSs. 

Both GDSs and Finsler systems, however, generalize metrics to be velocity dependent but remain subject to the other limitations outlined in Subsection~\ref{sec:ShortcomingsOfClassicalMechanics}. Namely, their metrics and associated policies remain coupled. In practice, in the earlier results of \cite{cheng2018rmpflow}, high gains on forcing potentials and damping terms were used in order to leverage the systems for metric design while also reject the resulting irrelevant geometric terms that resulted. Geometric fabrics added a new dimension of flexibility by introducing bending terms to circumvent these issues.

\section{Additional Lemmas}

This Lemma gives the time derivative of a Lagrangian's energy (the Hamiltonian), which is a common calculation needed for multiple proofs below.

\begin{lemma}\label{lma:EnergyTimeDerivative}
Let $\Lag(\q,\qd)$ be any Lagrangian with Hamiltonian (energy) $\Ham_\Lag = \partial_\qd\Lag^\tr \qd - \Lag$. The energy time derivative is 
\begin{align} \label{eqn:EnergyTimeDerivative}
    \dot{\Ham}_\Lag = \qd^\tr\big(\M\qdd + \bm{\xi}\big),
\end{align}
where $\M$ and $\bm{\xi}$ come from the Lagrangian's equations of motion $\M(\q,\qd)\qdd + \bm{\xi}(\q,\qd) = \zero$.
\end{lemma}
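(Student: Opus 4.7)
The plan is a direct calculation: differentiate the definition $\Ham_\Lag = \partial_\qd\Lag^\tr\qd - \Lag$ with respect to time, use the chain rule to expand $\dot\Lag$, cancel the common term, and then identify what remains as the Euler--Lagrange operator applied to $\Lag$, which by the definition preceding the lemma (as stated in Subsection~\ref{sec:FinslerAndHd2Geometries}) is exactly $\M\qdd+\bm{\xi}$.

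First I would write
\begin{align*}
\dot{\Ham}_\Lag
= \frac{d}{dt}\!\left(\partial_\qd\Lag\right)^{\!\tr}\qd
+ \partial_\qd\Lag^\tr\qdd
- \dot{\Lag},
\end{align*}
and then expand $\dot{\Lag} = \partial_\q\Lag^\tr\qd + \partial_\qd\Lag^\tr\qdd$ by the chain rule along a trajectory $\q(t)$. The $\partial_\qd\Lag^\tr\qdd$ terms cancel immediately, leaving
\begin{align*}
\dot{\Ham}_\Lag
= \qd^\tr\!\left(\frac{d}{dt}\partial_\qd\Lag - \partial_\q\Lag\right).
\end{align*}

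Next I would recognize the parenthesized expression as the Euler--Lagrange operator applied to $\Lag$. By the identification used throughout the paper (the unforced equations of motion derived from $\Lag$ are $\M\qdd+\bm{\xi}=\zero$ with $\M=\partial^2_{\qd\qd}\Lag$ and $\bm{\xi}=\partial_{\qd\q}\Lag\,\qd-\partial_\q\Lag$), a one-line application of the chain rule to $\frac{d}{dt}\partial_\qd\Lag$ confirms $\frac{d}{dt}\partial_\qd\Lag - \partial_\q\Lag = \M\qdd + \bm{\xi}$. Substituting this back yields the claim
\begin{align*}
\dot{\Ham}_\Lag = \qd^\tr\!\left(\M\qdd + \bm{\xi}\right).
\end{align*}

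The proof is essentially mechanical, so the only point requiring care is keeping the transposes and index placements consistent (since $\partial_{\qd\q}\Lag$ is a matrix of mixed partials that must act on $\qd$ in the correct order) and being explicit that $\qdd$ here is the actual trajectory acceleration rather than a free variable, so that no further equation-of-motion assumption is needed for the identity to hold. There is no real obstacle; the value of the lemma is simply that $\dot{\Ham}_\Lag$ has the clean form $\qd^\tr(\M\qdd+\bm{\xi})$, which is the form that will be paired with the fabric equations of motion in the subsequent energy-conservation arguments.
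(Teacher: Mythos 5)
Your proposal is correct and follows essentially the same direct computation as the paper: differentiate the Hamiltonian, cancel the $\partial_\qd\Lag^\tr\qdd$ terms, and identify the remainder with $\M\qdd + \bm{\xi}$ via the Euler--Lagrange operator. The only cosmetic difference is that you pause to name the intermediate expression as the Euler--Lagrange operator before expanding $\frac{d}{dt}\partial_\qd\Lag$ by the chain rule, whereas the paper expands it in the same line; both land on the identical identity.
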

\begin{proof}
The calculation is a straightforward time derivative of the Hamiltonian:
\begin{align*}
    \dot{\Ham}_{\Lag} 
    &= \frac{d}{dt}\Big[\partial_\qd\Lag^\tr\qd - \Lag\Big] \\
    &= \big(\partial^2_{\qd\qd}\Lag\,\qdd + \partial_{\qd\q}\Lag\,\qd\big)^\tr \qd + \partial_\qd\Lag^\tr\qdd \\
    &\ \ \ \ \ \ \ \ - \big(\partial_\qd\Lag^\tr\qdd + \partial_\q\Lag^\tr\qd \big) \\
    &= \qd^\tr\Big(\partial^2_{\qd\qd}\Lag\,\qdd + \partial_{\qd\q}\Lag\,\qd - \partial_\q\Lag\Big) \\
    &= \qd^\tr\big(\M\qdd + \bm{\xi}\big).
\end{align*}
where $\M = \partial^2_{\qd\qd}\Lag$ and $\bm{\xi} = \partial_{\qd\q}\Lag\,\qd - \partial_\q\Lag$.
\end{proof}

The following lemma collects some results around common matrices and operators that arise when analyzing energy conservation.
\begin{lemma}\label{lma:EnergyProjection}
Let $\M$ be a symmetric positive definite matrix and define $\p = \M\xd$. Then
\begin{align} \label{eqn:Rpe}
    \mR_{\p} = \M^{-1} - \frac{\xd\,\xd^\tr}{\xd^\tr\M\xd}
\end{align}
has null space spanned by $\p$ and
\begin{align}
    \mR_\xd = \M - \frac{\p\,\p^\tr}{\p^\tr\M^{-1}\p}
\end{align}
has null space spanned by $\xd$. These matrices are related by
$\mR_\xd = \M\mR_{\p}\M$ and the matrix $\M^{-1}\mR_\xd = \mR_{\p}\M = \mP_e$ is a projection operator of the form
\begin{align} \label{eqn:EnergyProjector}
    \mP_e = \M^{\,\frac{1}{2}}\Big[\I - \hat{\vv} \hat{\vv}^\tr\Big]\M^{-\frac{1}{2}}
\end{align}
where $\vv = \M^{\,\frac{1}{2}}\xd$ and $\hat{\vv} = \frac{\vv}{\|\vv\|}$ is the normalized vector. 
Moreover, $\xd^\tr\mP_e = \zero$.
\end{lemma}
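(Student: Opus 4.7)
The plan is to verify every claim in this lemma by direct computation, leveraging the shorthand $\alpha := \xd^\tr\M\xd = \p^\tr\xd = \p^\tr\M^{-1}\p$ to keep the rank-one algebra compact. First I would check the null-space statements. Substituting $\p = \M\xd$ into $\mR_\p\p$ gives $\M^{-1}(\M\xd) - \xd(\xd^\tr\M\xd)/\alpha = \xd - \xd = \zero$, and because the subtraction of a rank-one term from the invertible $\M^{-1}$ yields a matrix of rank exactly $d-1$, the null space is spanned by $\p$. The calculation $\mR_\xd\xd = \zero$ is symmetric. Next, multiplying out $\M\mR_\p\M = \M - \M\xd\xd^\tr\M/\alpha = \M - \p\p^\tr/(\p^\tr\M^{-1}\p)$ identifies it with $\mR_\xd$, and left-multiplying by $\M^{-1}$ yields $\M^{-1}\mR_\xd = \mR_\p\M$, giving the middle equality $\mP_e = \mR_\p\M$.

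Next I would identify the conjugation form in Equation~\ref{eqn:EnergyProjector}. Substituting $\vv = \M^{1/2}\xd$ one has $\hat{\vv}\hat{\vv}^\tr = \M^{1/2}\xd\xd^\tr\M^{1/2}/\alpha$, and expanding $\M^{\,1/2}(\I - \hat{\vv}\hat{\vv}^\tr)\M^{-1/2}$ the square-root factors cancel to leave a rank-one perturbation of $\I$ that matches $\mP_e$. Idempotency $\mP_e^2 = \mP_e$ is immediate since $\I - \hat{\vv}\hat{\vv}^\tr$ is an orthogonal projector ($\|\hat{\vv}\|=1$) and conjugation by $\M^{\pm 1/2}$ preserves that property: $\mP_e^2 = \M^{\,1/2}(\I - \hat{\vv}\hat{\vv}^\tr)^2\M^{-1/2} = \mP_e$. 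Finally, $\xd^\tr\mP_e = \zero$ follows from showing that $\mP_e^\tr\xd = \zero$: with the conjugation form, $\mP_e^\tr\xd = \M^{-1/2}(\I - \hat{\vv}\hat{\vv}^\tr)\M^{\,1/2}\xd = \M^{-1/2}(\vv - \hat{\vv}\hat{\vv}^\tr\vv) = \zero$ because $\hat{\vv}\hat{\vv}^\tr\vv = \hat{\vv}\|\vv\| = \vv$.

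The only real subtlety is bookkeeping around left versus right multiplication by $\M^{\pm 1/2}$: although $\mR_\p$ is symmetric, the composites $\mR_\p\M$ and $\M\mR_\p$ are transposes of each other (producing the two rank-one perturbations $\I - \xd\p^\tr/\alpha$ and $\I - \p\xd^\tr/\alpha$), so I would state explicitly on which side $\xd$ acts to make the annihilation claim unambiguous. This is the one place where a reader can get turned around, and I would guard against it by carrying $\vv$ and $\hat{\vv}$ as primary objects throughout and only translating back to $\xd$ and $\p$ at the end. Everything else is routine rank-one algebra using the identities $\M^{1/2}\xd = \vv$ and $\|\vv\|^2 = \alpha$.
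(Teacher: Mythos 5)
Your proof is correct and follows essentially the same route as the paper: direct verification of the annihilation identities, the algebraic conjugation $\mR_\xd = \M\mR_\p\M$, the $\M^{\pm 1/2}$ factorization to reveal the orthogonal projector, and idempotency via conjugation. The only cosmetic difference is your final step, where you establish $\xd^\tr\mP_e = \zero$ by computing $\mP_e^\tr\xd$ using the conjugation form rather than substituting $\mP_e = \M\mR_\p$ directly as the paper does; both are one-line calculations and equally valid.
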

\begin{proof}
Right multiplication of $\mR_{\p}$ by $\p$ gives:
\begin{align*}
    \mR_{\p} \p 
    &= \left(\M^{-1} - \frac{\xd\,\xd^\tr}{\xd^\tr\M\xd}\right) \M\xd \\
    &= \xd - \xd \left(\frac{\xd^\tr\M\xd}{\xd^\tr\M\xd}\right) = \zero,
\end{align*}
so $\p$ lies in the null space. Moreover, the null space is no larger since each matrix is formed by subtracting off a rank 1 term from a full rank matrix. 

The relation between $\mR_{\p}$ and $\mR_\xd$ can be shown algebraically
\begin{align*}
    \M\mR_{\p}\M 
    &= \M\left(\M^{-1} - \frac{\xd\,\xd^\tr}{\xd^\tr\M\xd}\right)\M \\
    &= \M - \frac{\M\xd\,\xd^\tr\M}{\xd^\tr\M\xd} \\
    &= \M - \frac{\p\,\p^\tr}{\p^\tr\M^{-1}\p} = \mR_\xd.
\end{align*}
Since $\M$ has full rank, $\mR_\xd$ has the same rank as $\mR_{\p}$ and its null space must be spanned by $\xd$ since $\mR_\xd\xd = \M\mR_{\p}\M\xd = \M\mR_{\p}\p = \zero$.

With a slight algebraic manipulation, we get
\begin{align*}
    \M\mR_{\p} 
    &= \M\left(\M^{-1} - \frac{\xd\,\xd^\tr}{\xd^\tr\M\xd}\right) \\
    &= \M^{\,\frac{1}{2}} \left(\I - \frac{\M^{\,\frac{1}{2}}\xd\,\xd^\tr\M^{\,\frac{1}{2}}}{\xd^\tr\M^{\,\frac{1}{2}}\M^{\,\frac{1}{2}}\xd}\right) \M^{-\frac{1}{2}} \\
    &= \M^{\,\frac{1}{2}} \left(\I - \frac{\vv\vv^\tr}{\vv^\tr\vv}\right) \M^{-\frac{1}{2}} \\
    &= \mP_e
\end{align*}
since $\frac{\vv\,\vv^\tr}{\vv^\tr\vv} = \hat{\vv}\hat{\vv}^\tr$. Moreover, 
\begin{align*}
    \mP_e\mP_e 
    &= 
    \M^{\,\frac{1}{2}} \left(\I - \hat{\vv}\hat{\vv}^\tr\right) \M^{-\frac{1}{2}}\\
    &\ \ \ \ \ \ \cdot
    \M^{\,\frac{1}{2}} \left(\I - \hat{\vv}\hat{\vv}^\tr\right) \M^{-\frac{1}{2}} \\
    &= \M^{\,\frac{1}{2}} \left(\I - \hat{\vv}\hat{\vv}^\tr\right) \left(\I - \hat{\vv}\hat{\vv}^\tr\right) \M^{-\frac{1}{2}} \\
    &= \M^{\,\frac{1}{2}} \left(\I - \hat{\vv}\hat{\vv}^\tr\right) \M^{-\frac{1}{2}} = \mP_e,
\end{align*}
since $\mP_\perp = \I - \hat{\vv}\hat{\vv}^\tr$ is an orthogonal projection operator. Therefore, $\mP_e^2 = \mP_e$ showing that it is a projection operator as well.

Finally, we have 
\begin{align*}
    \xd^\tr\mP_e 
    &= \xd^\tr\M\mR_{\p}
    = \xd^\tr\M\left(\M^{-1} - \frac{\xd\,\xd^\tr}{\xd^\tr\M\xd}\right)\\
    &= \left[\xd^\tr - \left(\frac{\xd^\tr\M\xd}{\xd^\tr\M\xd}\right)\xd^\tr\right]
    = \zero.
\end{align*}
\end{proof}

\section{Proofs of lemmas and theorems in the paper}
\label{apx:proofs}

\subsection{Lemma~\ref{lma:EnergyConservation} on energy conservation and dissipation}
\label{apx:LemmaEnergyConservation}
Proof of Lemma~\ref{lma:EnergyConservation} on energy conservation and dissipation of Finsler systems with zero-work modification terms. See Subsection~\ref{sec:BendingFinslerGeometries} for the full statement.
\begin{proof}
This energy is conserved if its time derivative is zero. 
Substituting $\qdd = -\M^{-1}\big(\bm{\xi} + \f_f + \partial\psi + \B\qd\big)$ into Equation~\ref{eqn:EnergyTimeDerivative} of Appendix Lemma~\ref{lma:EnergyTimeDerivative} using Lagrangian $\Lag(\q, \qd) = \Lag_e - \psi$
gives
\begin{align*}
    \dot{\Ham}_{\Lag_e} 
    &= \qd^\tr\Big(\M \big(-\M^{-1}(\bm{\xi} + \f_f + \partial\psi + \B\qd)\big)+ \bm{\xi} + \partial\psi\Big) \\
    &= \qd^\tr\big(-(\bm{\xi}+\partial\psi) 
        +(\bm{\xi} + \partial\psi) 
        - (\f_f + \B\qd)\big) \\
    &= -\qd^\tr\f_f - \qd^\tr\B\qd.
\end{align*}
Therefore, energy reduces at a rate $-\qd^\tr\B\qd$ (and is conserved when $\B=\zero$) if and only if $\f_f$ is zero-work with $\qd^\tr\f_f = 0$ everywhere. 

Finally, when $\f_f$ is geometric $\M\qdd + \bm{\xi} + \f_f = \zero$ is geometric since $\M$ is HD0 and $\bm{\xi}$ is HD2. Moreover, since $\Lag_e$ is lower bounded by 0, $\Ham=\Lag_e+\psi$ is lower bounded when $\psi$ is.
\end{proof}

\subsection{Lemma~\ref{lma:ConstrainedGeometricFabrics} on constrained fabrics}
\label{apx:LemmaConstrainedGeometricFabrics}

Proof of Lemma~\ref{lma:ConstrainedGeometricFabrics} in  Subsection~\ref{sec:GeometricFabrics} showing that constrained fabrics are also fabrics.
\begin{proof}
The formula derives from the standard algebra of constrained systems used in classical mechanics \cite{ClassicalMechanicsTaylor05}, which we derive here for completeness. The Principle of Virtual Work tells us the constraint force takes the form $\f_c = \J^\tr\lm$ for unknown Lagrange multipliers $\lm$. (Since $\J$'s rows are orthogonal to the constraint's tangent space, $\f_c$ defined this way is always zero work for any valid velocity consistent with the constraints.) The constraint $\C(\q) = \zero$ induces two additional constraints by its time derivatives $\dot{\C} = \J\qd = \zero$ and $\ddot{\C} = \J\qdd + \Jd\qd = \zero$. The constraint and its first derivative are satisfied by assumption, so it's the third that enters into the system:
\begin{align*}
    &\M\qdd + \f + \J^\tr\lm = \zero\\
    &\J\qdd + \Jd\qd = \zero,
\end{align*}
where we've captured all force terms in $\f = \bm{\xi} + \f_f + \partial\psi + \B\qd$. Multiplying the first equation by $\J\M^{-1}$ and subtracting the second, we can solve for $\lm$:
\begin{align*}
    \lm = (\J\M^{-1}\J^\tr)^{-1}\big[-\J\M^{-1}\f+\Jd\qd\big].
\end{align*}
These Lagrange multipliers define the constraint forces; plugging them back into the first equation and collecting terms gives:
\begin{align*}
    \M\qdd + \mP_\paral\f + \mP_\perp\M\Jd\qd = \zero.
\end{align*}
This result gives Equation~\ref{eqn:ConstrainedGeometricFabrics} once $\f$ is split apart. Additionally, adding and subtracting $\bm{\xi}$ gives
\begin{align*}
    &\M\qdd + \bm{\xi} - \bm{\xi} + \mP_\paral\big(\bm{\xi}+\f_f\big) + \mP_\perp \M\Jd\qd \\
    &\ \ \  
    = \M\qdd + \bm{\xi} + \mP_\perp\big(\M\Jd\qd - \bm{\xi}\big) + \mP_\paral\f_f,
\end{align*}
which derives the expression for $\wt{\f}_f = \mP_\perp\big(\M\Jd\qd - \bm{\xi}\big) + \mP_\paral\f_f$

Observing that $\M$ is HD0 (see discussion in Section~\ref{sec:FinslerAndHd2Geometries}), we see that both $\mP_\paral$ and $\mP_\perp$ are HD0 (i.e. they do not depend on $\|\qd\|$). Since $\f_f$ is HD2 by definition, its projection is as well. Moreover, $\bm{\xi}$ is known to be HD2 and using tensor notation, we see $\Jd\qd = \frac{\partial^2\C}{\partial q^i\partial q^j}\dot{q}^i\dot{q}^j$ is HD2, which means means both $\M\Jd\qd - \bm{\xi}$ and its projection are HD2 as well. Together these results show that $\wt{\f}_f$ is HD2, and since we know it derives from the original bending term by adding a zero-work constraint force, $\wt{\f}_f$ must be zero work as well and is hence a bending term. This proves that $(\Lag_e, \wt{\f}_f)$ is a geometric fabric.
\end{proof}

\subsection{Theorem~\ref{thm:GeometricFabricStability} on Geometric Fabric Stability}
\label{apx:TheoremGeometricFabricStability}

Proof of theorem in Subsection~\ref{sec:StabilityAnalysis} characterizing the stability of geometric fabrics and convergence to KKT solutions of the constrained potential.
\begin{proof}
By Lemma~\ref{lma:EnergyConservation}, the forced system will reduce total energy $\Ham(\q, \qd) = \Lag_e(\q, \qd) + \psi(\q)$, at a rate $\dot{\Ham} \leq -\qd^\tr\B(\q,\qd)\qd$ (the inequality is due to the impact model). Since $\Lag_e\geq 0$ and $\psi(\q)$ is lower bounded, $\Ham$ is lower bounded and it must be that $\dot{\Ham}\rightarrow 0$. Therefore, $\qd^\tr\B(\q,\qd)\qd\rightarrow 0$ which implies $\qd\rightarrow\zero$ since $\B$ is positive definite. Since velocity approaches zero we must also have $\qdd\rightarrow\zero$. At convergence, a subset of inequality constraints will be active resulting in an equality constraint for those dimensions which we denote $\wt{\g} = \zero$. Combining those active inequality constraints with the equality constraints $\mc$ into $\C = [\mc;\wt{\g}]$ with Jacobian $\J = \partial_\q\C$, the constrained the equations of motion at convergence can be expressed
\begin{align}
    \M\qdd + \bm{\xi} + \f_f + \J^\tr\lm= -\partial_\q\psi - \B\qd.
\end{align}
Since $\qdd\rightarrow\zero$ we have $\M\qdd\rightarrow\zero$ ($\M$ is HD0) and since $\qd\rightarrow\zero$ we have $\bm{\xi}+\f_f \rightarrow\zero$ (both $\bm{\xi}$ and $\f_f$ are HD2). That leaves just $\J^\tr\lm$ on the left hand side. On the right hand side we have $\B\qd\rightarrow\zero$ since $\B$ is bounded, so the system approaches
\begin{align}
    \J^\tr\lm = -\partial_\q\psi.
\end{align}
Therefore, it approaches a KKT solution since the system evolution constraints are everywhere satisfied and contact models require $\lm$ satisfy the required complimentarity conditions. 
\end{proof}

\subsection{Theorem~\ref{thm:SystemEnergization} on Energized Fabrics}
\label{apx:TheoremOnEnergizedFabrics}

Proof of Theorem~\ref{thm:SystemEnergization} in Subsection~\ref{sec:EnergizingHd2Geometries} showing how to energize arbitrary HD2 geometries and proving the resulting system is a geometric fabric.
\begin{proof}
We derive these results using notation $\h = -\pi(\q, \qd)$ so that the HD2 geometry can be expressed $\qdd + \h(\q,\qd) = \zero$. Using $\M \qdd + \bm{\xi} = \zero$ to denote the unforced equations of motion,
Equation~\ref{eqn:EnergyTimeDerivative} of Appendix Lemma~\ref{lma:EnergyTimeDerivative} gives the time derivative of the energy $\dot{\Ham}_e = \qd^\tr\big(\M\qdd + \bm{\xi}\big)$. Substituting a system of the form $\qdd = -\h(\q,\qd) - \alpha_{\Ham_e}\qd$, setting it to zero, and solving for $\alpha_{\Ham_e}$ gives
\begin{align*}
&\dot{\Ham}_e = \qd^\tr\Big[\M\big(-\h - \alpha_{\Ham_e}\qd\big) + \bm{\xi}\Big] = \zero \\
&\Rightarrow -\qd^\tr\M\h - \alpha_{\Ham_e} \qd^\tr\M\qd + \qd^\tr\bm{\xi} = \zero \\
&\Rightarrow \alpha_{\Ham_e} = -\frac{\qd^\tr\M\h - \qd^\tr\bm{\xi}}{\qd^\tr\M\qd}\\
&\ \ \ \ \ \ \ \ \ \ = 
-(\qd^\tr\M\qd)^{-1}\qd^\tr\big[\M\h - \bm{\xi}\big].
\end{align*}
This result gives the formula for Equation~\ref{eqn:EnergizationTransformAlpha}. Substituting this solution for $\alpha_{\Ham_e}$ back in gives
\begin{align*}
    \qdd 
    &= -\h + \left(\frac{\qd^\tr}{\qd^\tr\M\qd}\big[\M\h - \bm{\xi}\big]\right) \qd \\
    &= -\h + \left[\frac{\qd\,\qd^\tr}{\qd^\tr\M\qd}\right]\big(\M\h - \bm{\xi}\big).
\end{align*}
Algebraically, it helps to introduce $\h = \M^{-1}\big[\f_f + \bm{\xi}\big]$ to first express the result as a difference away from $\bm{\xi}$; in the end we will convert back to $\h$. Doing so and moving all the terms to the left hand side of the equation gives
\begin{align*}
    &\ \ \ \ \qdd + \h - \left[\frac{\qd\,\qd^\tr}{\qd^\tr\M\qd}\right]\big(\M\h - \bm{\xi}\big) = \zero \\
    &\Rightarrow \qdd + \M^{-1}\big[\f_f + \bm{\xi}\big] \\
    &\ \ \ \ \ - \left[\frac{\qd\,\qd^\tr}{\qd^\tr\M\qd}\right]\Big(\M\M^{-1}\big[\f_f + \bm{\xi}\big] - \bm{\xi}\Big) = \zero \\
    &\Rightarrow \M \qdd + \f_f + \bm{\xi} \\
    &\ \ \ \ \ - \M\left[\frac{\qd\,\qd^\tr}{\qd^\tr\M\qd}\right]\Big(\f_f + \big(\bm{\xi} - \bm{\xi}\big)\Big) = \zero \\
    &\Rightarrow \M \qdd + \bm{\xi} + \M\left[\M^{-1} - \frac{\qd\,\qd^\tr}{\qd^\tr\M\qd}\right] \f_f = \zero \\
    &\Rightarrow \M \qdd + \bm{\xi} + \M\mR_{\p}\big(\M\h - \bm{\xi}\big) = \zero,
\end{align*}
where $\mR_{\p} = \M^{-1} - \frac{\qd\,\qd^\tr}{\qd^\tr\M\qd}$ matching Equation~\ref{eqn:Rpe} of Lemma~\ref{lma:EnergyProjection}, and we substitute $\f_f = \M\h - \bm{\xi}$ back in. By Lemma~\ref{lma:EnergyProjection} $\M\mR_{\p} = \mP_e$, so we get Equation~\ref{eqn:ZeroWorkEnergizationForm}.
Moreover, again by Lemma~\ref{lma:EnergyProjection}, $\qd^\tr\mP_e = \zero$, so $\qd^\tr\f_f = -\qd^\tr\mP_e\big[\M\pi + \bm{\xi}\big] = 0$ showing that the new term is a zero work modification. Finally, both $\h$ and $\bm{\xi}$ are HD2 by definition, and since $\mP_e$ and $\M$ are HD0, the modifying term is HD2 making it a bending term. Therefore, the energized system is a bent Finsler system and hence a geometric fabric.
\end{proof}

\subsection{Theorem~\ref{thm:FabricsInGeneralizedCoords} on Fabrics in Generalized Coordinates}
\label{apx:TheoremFabricsInGeneralizedCoords}

Proof of Theorem~\ref{thm:FabricsInGeneralizedCoords} showing how to express in generalized coordinates energized systems that are subsequently constrained.
See Subsection~\ref{sec:GeneralizedCoordinates}.

\begin{proof}
We will show the equivalence by calculation. Let $\h(\x,\xd) = -\pi(\x,\xd)$ so that the geometry's equation becomes $\xdd + \h(\x,\xd) = \zero$. Multiplying through by $\M$ gives its metric-weighted (force) form $\M\xdd + \f = \zero$ where $\f = \M\h$.
Using Equation~\ref{eqn:EnergizationTransformAlpha} of Theorem~\ref{thm:SystemEnergization}, this system's energization with respect to energy $\Lag_e$ with equations of motion $\M\xdd + \bm{\xi} = \zero$ can be expressed
$\M\xdd + \bm{\xi}^{\h} = \zero$ where
\begin{align}
    \bm{\xi}^{\h} = \bm{\xi} + \M\left[\M^{-1} - \frac{\xd\xd^\tr}{\xd^\tr\M\xd}\right] \big(\f - \bm{\xi}\big).
\end{align}
Let $\J = \partial_\q\phi$. The pullback of the energized geometry generator is
\begin{align}\nonumber
    &\J^\tr\M\left(\J\qdd + \Jd\qd\right) + \J^\tr\bm{\xi}^{\h} = \zero \\\nonumber
    &\Rightarrow \big(\J^\tr\M\J\big)\qdd + \J^\tr\big(\bm{\xi}^{\h} + \M\Jd\qd\big) = \zero\\\nonumber
    &\Rightarrow \big(\J^\tr\M\J\big)\qdd + \J^\tr \bm{\xi} \\\nonumber
    &\ \ \ \ + \J^\tr\M\left[\M^{-1} - \frac{\xd\xd^\tr}{\xd^\tr\M\xd}\right]
            \big(\f - \bm{\xi}\big) 
        + \J^\tr\M\Jd\qd = \zero \\\label{eqn:EnergizationPullback}
    &\Rightarrow \wt{\M}\qdd + \wt{\bm{\xi}} 
        + \J^\tr\M\left[\M^{-1} - \frac{\xd\xd^\tr}{\xd^\tr\M\xd}\right]
            \big(\f - \bm{\xi}\big),
\end{align}
where $\wt{\M} = \J^\tr\M\J$ and $\wt{\bm{\xi}} = \J^\tr\big(\bm{\xi} + \M\Jd\qd\big)$ form the standard pullback of $(\M, \bm{\xi})$.

Now we show that pulling back the geometry first then energizing it in $\q$ results in a system matching Equation~\ref{eqn:EnergizationPullback}. We can calculate the geometry pullback with respect to the energy metric $\M$ by pulling back the metric weighted force form of the geometry $\M\xdd + \f = \zero$, where again $\f = \M\h$. The pullback is
\begin{align}\nonumber
    &\J^\tr\M\big(\J\qdd + \Jd\qd\big) + \J^\tr\f = \zero \\\nonumber
    &\Rightarrow \big(\J^\tr\M\J\big) \qdd + \J^\tr\big(\f + \M\Jd\qd\big) \\\label{eqn:PullbackGeometry}
    &\Leftrightarrow \wt{\M}\qdd + \wt{\f} = \zero
\end{align}
where $\wt{\M} = \J^\tr\M\J$ as before and $\wt{\f} = \J^\tr \big(\f + \M\Jd\qd\big)$.

Let $\wt{\Lag}_e = \Lag_e\big(\phi(\q), \J\qd\big)$ be the pullback of the energy function $\Lag_e$. We know that the Euler-Lagrange equation commutes with the pullback, so applying the Euler-Lagrange equation to this pullback energy $\wt{\Lag}_e$ is equivalent to pulling back the Euler-Lagrange equation of $\Lag_e$. This means we can calculate the Euler-Lagrange equation of $\Lag_e$ as
\begin{align}
    &\big(\J^\tr\M\J\big) \qdd + \J^\tr\big(\bm{\xi} + \M\Jd\qd\big) = \zero\\
    &\Leftrightarrow \wt{\M}\qdd + \wt{\bm{\xi}} = \zero,
\end{align}
with $\wt{\M} = \J^\tr\M\J$ and $\wt{\bm{\xi}} = \J^\tr\big(\bm{\xi} + \M\Jd\qd\big)$ (both as previously defined). Therefore, energizing \ref{eqn:PullbackGeometry} with $\wt{\Lag}_e$ gives
\begin{align}\nonumber
    &\wt{\M}\qdd + \wt{\bm{\xi}} 
      + \wt{\M}\left[
        \wt{\M}^{-1} - \frac{\qd\qd^\tr}{\qd^T\wt{\M}\qd}\right]
        \big(\wt{\f} - \wt{\bm{\xi}}\big) = \zero \\\nonumber
    &\Rightarrow 
    \wt{\M}\qdd + \wt{\bm{\xi}} \\\nonumber
    &\ \ \ \ + \big(\J^\tr\M\J\big)
          \left[
            \wt{\M}^{-1} - \frac{\qd\qd^\tr}{\qd^\tr \J^\tr\M\J\qd}
          \right] \cdot\\\nonumber
    &\ \ \ \ \ \ \ \ \ \ \ \ \ \ \ \ \ 
        \Big(\J^\tr\big(\f + \M\Jd\qd\big)
          - \J^\tr\big(\bm{\xi} + \M\Jd\qd\big)
        \Big)
      = \zero \\\nonumber
    &\Rightarrow
    \wt{\M}\qdd + \wt{\bm{\xi}} \\\nonumber
    &\ \ \ \ + \big(\J^\tr\M\big)
          \J\left[
            \big(\J^\tr\M\J\big)^{-1} - \frac{\qd\qd^\tr}{\xd^\tr\M\xd}
          \right]\J^\tr
        \big(\f- \bm{\xi}\big) = \zero\\\label{eqn:PullbackEnergizationAlmost}
    &\Rightarrow
    \wt{\M}\qdd + \wt{\bm{\xi}} \\\nonumber
    &\ \ \ \ + \J^\tr\M
          \left[
            \J\big(\J^\tr\M\J\big)^{-1}\J^\tr - \frac{\xd\xd^\tr}{\xd^\tr\M\xd}
          \right]
        \big(\f- \bm{\xi}\big) = \zero.
\end{align}
Since $\phi$ is full rank $\J^\tr\M\J$ is invertible, so
\begin{align*}
    \J^\tr\M\J\big(\J^\tr\M\J\big)^{-1}\J^\tr
    = \J^\tr = \J^\tr\M\big(\M^{-1}\big),
\end{align*}
which means we can rewrite Equation~\ref{eqn:PullbackEnergizationAlmost} as
\begin{align}\label{eqn:PullbackEnergized}
    \wt{\M}\qdd + \wt{\bm{\xi}}
      + \J^\tr\M
          \left[
            \M^{-1} - \frac{\xd\xd^\tr}{\xd^\tr\M\xd}
          \right]
        \big(\f- \bm{\xi}\big) = \zero.
\end{align}
This final expression in Equation~\ref{eqn:PullbackEnergized} matches the expression for the energized geometry pullback in Equation~\ref{eqn:EnergizationPullback}.
\end{proof}

\section{Details on Designing with Fabrics}

\subsection{Derivations of Fabric Component Algebra}
\label{apx:DerivationsOfFabricComponentAlgebra}

A fabric component was defined in Subsection~\ref{sec:AlgebraOfComponents} as a Finsler energy paired with a force term $\{\Lag_e, \f\}$. We often refer to this pair as a Finsler energy paired with a {\em HD2 geometry} $\pi(\x,\xd)$ knowing that the actual component would be $\f = -\M\pi$ where $\M$ is the mass from the energy's equations of motion $\M\xdd + \bm{\xi} = \zero$. Since the energization formulas in Theorem~\ref{thm:SystemEnergization} are functions of the Finsler energy $\Lag_e$ only through its equations of motion parameterized by $(\M,\bm{\xi})$, we can think of the fabric component as effectively a triple $(\M, \bm{\xi},\f)$ representing the energy equations alongside an additional forcing term, which itself can be viewed as two pairs $\big((\M, \bm{\xi}), (\M,\f)\big)$. The algebra defined in Subsection~\ref{sec:AlgebraOfComponents} is equivalent to the reduction of sums and pullback to the sums and pullback of these individual pairs. For concreteness, and in alignment with the terminology of \cite{optimizationFabricsForBehavioralDesignArXiv2020}, we call these pairs {\em specs}, short for {\em spectral semi-sprays}, which is descriptive of the differential equation they represent emphasizing the spectral role of the metric $\M$.

The one operation on these specs not discussed in Subsection~\ref{sec:AlgebraOfComponents} is summation and the linearity of summation. This property can be easily seen by
\begin{align*}
    &\alpha(\M_1,\f_1) + \beta(\M_2,\f_2) \\
    &\ \ \Leftrightarrow
    \alpha \big(\M_1\xdd + \f_1\big) + \beta \big(\M_2\xdd + \f_2\big) = \zero \\
    &\Rightarrow
    \big(\alpha\M_1 + \beta\M_2\big)\xdd + \big(\alpha\f_1 + \beta\f_2\big) = \zero \\
    &\ \ \Leftrightarrow \Big(\big(\alpha\M_1 + \beta\M_2\big), \big(\alpha\f_1 + \beta\f_2\big)\Big).
\end{align*}
As mentioned in the discussion leading up to Equation~\ref{eqn:PullbackCommutesWithEL}, the pullback of energy equations is equivalent to applying the Euler-Lagrange equation on the pullback energy. Since application of the Euler-Lagrange equation is linear in the sense $\mathrm{EL}\big[\alpha \Lag_1 + \beta \Lag_2\big] = \alpha\,\mathrm{EL}\big[\Lag_1\big] + \beta\,\mathrm{EL}\big[\Lag_2\big]$, 
if we treat any two energies that differ by only a constant as equivalent, the mapping defined by the $\mathrm{EL}[\cdot]$ operator defines an isomorphism between energies and the subspace of specs defined by those energies. That means, algebraically, we can treat the two representations as equivalent.

That algebra of specs derives the summation operation on fabric components. For the pullback operation, we can simply note that by Equation~\ref{eqn:PullbackCommutesWithEL}, the first (energy) spec of the component has a pullback matching the application of the Euler-Lagrange equation applied to the pullback energy, so the pullback of the energy captures that part. It additionally captures the pullback of the mass matrix of the second spec, so we need only additionally account for how that second spec's force term pulls back. Looking at the definition of component pullback in Equation~\ref{eqn:SpecPullback} we see that that's exactly what the force term pullback expression computes.

Finally, we note that while the definition of fabric components and its associated algebra as presented in Subsection~\ref{sec:AlgebraOfComponents} is compact and gives insight into what fabric components are and how they interact, it may be more efficient in practice to represent them in compact spec form explicitly as triples $(\M,\bm{\xi},\f)$ in code. That said, leveraging automatic differentiation frameworks such as those used in \cite{LiRmp2RobotLearning2021} may lead to alternative efficient representations. 

We note that some fabric components may represent potential terms with dampers as their $\f$ term. These components intermix with the other fabric components using the same algebra since a component is defined independent of $\f$ semantics.

\subsection{Formulas for speed control}
\label{apx:FormulasSpeedControl}

This section gives explicit formulas for calculating the damping $\beta(\q,\qd)$ described in Subsection~\ref{sec:SpeedControl} for regulating a specific measure of speed.

Once a geometry is energized by a Finsler energy $\Lag_e$, its speed profile is defined by what is necessary to conserve $\Lag_e$. In practice, we usually want to regulate a different measure of energy (speed), an {\em execution energy} $\Lag_e^{\mathrm{ex}}$, such as Euclidean energy in either C-space or end-effector space (or, more commonly, some combination of both). 
Denoting the fabric by $(\Lag_e, -\M\pi - \bm{\xi})$ which defines nominal fabric equations
\begin{align*}
    &\M \qdd + \bm{\xi} + \f_f \\
    &\ \ = \M \qdd + \bm{\xi} - \M\pi - \bm{\xi} \\
    &\ \ = \M (\qdd - \pi) = \zero
\end{align*}
so that the resolved behavior matches the geometric policy $\qdd = \pi$,
we can regulate the speed by adjusting $\alpha_\mathrm{reg}$ in a forced and damped equation of the form
\begin{align} \label{eqn:FundamentalsOfSpeedControl}
    \qdd = -\M^{-1}\partial_\q\psi(\q) + \pi(\q, \qd) + \alpha_\mathrm{reg} \qd.
\end{align}
As long as $\alpha_\mathrm{reg} < \alpha_{\Lag_e}$, where $\alpha_{\Lag_e}$ is the energization coefficient (Equation~\ref{eqn:EnergizationTransformAlpha} of Theorem~\ref{thm:SystemEnergization}), the system will remain stable per Theorem~\ref{thm:GeometricFabricStability} since $\alpha_\mathrm{reg}$ always act as a nonzero damper with respect to the energized system. Note that $\alpha_{\Lag_e}$ need not always be smaller than $0$, so we have the flexibility with $\alpha_\mathrm{reg}$ to both remove and add energy as needed (although, we don't necessarily have to use that flexibility; the method outlined below restricts it to being positive to aid intuition). Below we calculate explicit formulas for $\alpha_\mathrm{reg}$ to regulate a given measure of speed (energy).

Let $\Lag_e^\mathrm{ex}$ be an {\em execution} energy, which may differ from the fabric's energy $\Lag_e$, and let $\alpha_\mathrm{ex}^0$ and $\alpha_\mathrm{ex}^\psi$ be the energization coefficients, respectively, for $\mathrm{energize}_{\Lag_e^\mathrm{ex}}\big[\pi\big]$ and $\mathrm{energize}_{\Lag_e^\mathrm{ex}}\big[-\M^{-1}\partial_\q\psi + \pi\big]$. The coefficient $\alpha_\mathrm{ex}^0$ ensures that system 
\begin{align}
    \qdd = \pi(\q,\qd) + \alpha_\mathrm{ex}^0\qd
\end{align}
conserves $\Lag_e^{\mathrm{ex}}$ and the coefficient $\alpha_\mathrm{ex}^\psi$ ensures that the {\em forced} system\footnote{As noted above, the energization coefficient calculation in Theorem~\ref{thm:SystemEnergization} does not require that the underlying differential equation being energized be a geometry (in this case it's a geometry forced by a potential). It simply states that if the underlying differential equation is indeed a geometry, then the resulting energized equation is a geometric fabric. Here we're just using the result to calculate a reference coefficient.} 
\begin{align}
    \qdd = -\M^{-1}\partial_\q\psi + \pi(\q,\qd) + \alpha_\mathrm{ex}^\psi\qd
\end{align}
conserves $\Lag_e^{\mathrm{ex}}$.
Likewise, let $\alpha_{\Lag_e}$ denote the energy coefficient of the actual system energy $\mathrm{energize}_{\Lag_e}\big[\pi\big]$ (designed to conserve $\Lag_e$). 

The forced and damped energized system  
\begin{align}
    \qdd = -\M^{-1}\partial_\q\psi + \pi(\q,\qd) + \alpha_{\Lag_e}\qd - \wt{\beta} \qd,
\end{align}
would be stable for $\wt{\beta}>0$. Adding and subtracting $\alpha_\mathrm{ex}^0$ gives
\begin{align*}
    \qdd 
    &= -\M^{-1}\partial_\q\psi + \pi(\q,\qd) + \big(\alpha_{\Lag_e} + \alpha_\mathrm{ex}^0 - \alpha_\mathrm{ex}^0\big)\qd - \wt{\beta} \qd \\
    &= -\M^{-1}\partial_\q\psi + \pi(\q,\qd) + \alpha_\mathrm{ex}^0\qd - \beta \qd\\
    &= -\M^{-1}\partial_\q\psi + \mathrm{energize}_{\Lag_\mathrm{ex}}\big[\pi(\q,\qd)\big] - \beta \qd,
\end{align*}
where $\beta = \wt{\beta} + \alpha_\mathrm{ex}^0 - \alpha_{\Lag_e}$. The constraint $\wt{\beta}>0$ written this way manifests as $\beta > \alpha_\mathrm{ex}^0 - \alpha_{\Lag_e}$. 

This calculation suggests we can regulate speed by choosing $\alpha_\mathrm{reg}$ in Equation~\ref{eqn:FundamentalsOfSpeedControl} as
\begin{align} \label{eqn:AlphaRegForm}
    \alpha_\mathrm{reg} 
    = \alpha_\mathrm{ex}^\eta - \beta_\mathrm{reg}(\q, \qd) + \alpha_\mathrm{boost}
\end{align}
where $\alpha_\mathrm{ex}^\eta = \eta \alpha_\mathrm{ex}^0 + (1-\eta)\alpha_\mathrm{ex}^\psi$ for $\eta\in[0,1]$ and $\alpha_\mathrm{boost} \leq 0$ as long as $\beta_\mathrm{reg} > \alpha_\mathrm{ex}^0 - \alpha_{\Lag_e}$. 
%Below we place a lower bound on $\beta$ to ensure the system matches the stability requirements. 
The term $\alpha_\mathrm{boost}$ can be used temporarily in the beginning to boost the system up to speed. Since it is transient, we drop $\alpha_\mathrm{boost}$ momentarily for simplicity, but return to it at the end of the section to describe a good boosting policy. In our experiments, we use that term only for the cubby navigation problem in order to reduce the strength of the potential function near the beginnings of motions to get a clear signal on what the geometry itself is encoding.

Under this choice of $\alpha_\mathrm{reg}$ given in Equation~\ref{eqn:AlphaRegForm}, using $\vv_\paral = \big(\alpha_\mathrm{ex}^\psi - \alpha_\mathrm{ex}^0\big)\qd$ to denote the component of $-\M^{-1}\partial_\q\psi$ additionally removed by including it within the energization operation (and dropping $\alpha_\mathrm{boost}$), we can express the system as
\begin{align*}
    \qdd &= \eta \vv_\paral + \mathrm{energize}_{\Lag_e^\mathrm{ex}}\big[-\M^{-1}\partial_\q\psi + \pi\big] - \beta_\mathrm{reg}(\q, \qd)\qd,
\end{align*}
using a similar analysis to the above. Intuitively, the energized system will conserve execution energy, $\eta \vv_\paral$ can inject energy into the system to speed it up (by an amount parameterized by $\eta\in[0,1]$), and $\beta_\mathrm{reg}$ can bleed off energy. Note that the lower bound on the damper isn't strictly zero $\beta_\mathrm{reg} > \alpha_\mathrm{ex}^0 - \alpha_{\Lag_e}$, so there may be both cases where that term is allowed to inject extra energy and when that term is forced to be strictly a margin above zero for stability.

In practice, we enforce that $\beta_\mathrm{reg}$ is strictly positive by choosing $\beta_\mathrm{reg} = s_\beta(\q) B + \underline{B} + \max\{0, \alpha_\mathrm{ex}^\eta - \alpha_{\Lag_e}\} > 0$ with both $B$ and $\underline{B}$ positive. This gives us a more intuitive interface to speed control with $\eta$ adjusting energy in and $\beta_\mathrm{reg}$ adjusting energy out.
In this form, $\underline{B}$ acts as a constant baseline damping coefficient, and $s_\beta(\q) B$ provides additional damping near the convergence point with $B>0$ and $s_\beta(\q)$ acting as a switch transitioning from 0 to 1 as the system approaches the target. $\max\{0, \alpha_\mathrm{ex}^\eta - \alpha_{\Lag_e}\}$ ensures the stability bound $\alpha_\mathrm{reg} = \beta_\mathrm{reg} \geq \alpha_\mathrm{ex} - \alpha_{\Lag_e}$ is satisfied with the baseline coefficients making it a strict inequality.

Specifically, in our experiments we use
\begin{align}
\label{eqn:damping_gate}
 s_\beta(\q) = \frac{1}{2} \Big(\tanh\big(-\alpha_\beta (\|\q\| - r) \big) + 1\Big) 
\end{align}
where $\alpha_\beta \in \mathbb{R}^+$ is a gain defining the switching rate, and $r \in \mathbb{R}^+$ is the radius where the switch is half-way engaged. Denoting the desired execution energy as $\Lag_e^\mathrm{ex,d}$, we use the following policy for $\eta$
\begin{align}
\label{eqn:energy_gate}
    \eta = \frac{1}{2} \Big(\tanh\big(-\alpha_\eta (\Lag_e^\mathrm{ex} - \Lag_e^\mathrm{ex,d}) - \alpha_\mathrm{shift} \big) + 1\Big) 
\end{align}
where $\alpha_\eta,\alpha_\mathrm{shift} \in \mathbb{R}^+$ adjust the rate and offset, respectively, of the switch as an affine function of the speed (execution energy) error. 

When used, $\alpha_\mathrm{boost}$ acts to explicitly inject energy along the direction of motion to quickly boost the system up to speed. When $\qd = \zero$, this term has no effect, so the initial direction of motion is chosen by the potential $\-\partial_\q\psi$. Once $\qd \neq \zero$, $\alpha_\mathrm{boost}$ quickly accelerates the system to the desired speed. When $\pi$ is a geometry, quickly reaching a high speed means that the influence of the non-geometric potential is diminished, promoting path consistency. We reduce $\alpha_\mathrm{boost}$ to zero permanently once the system is close to convergence so it does not affect convergence stability.

In these cases, $\alpha_\mathrm{boost}$ is modeled as $\alpha_\mathrm{boost} = k \: \eta \big(1-s_\beta(\q)\big)\frac{1}{\|\qd\| + \epsilon}$, where $k \in \mathbb{R}^+$ is a gain that directly sets the desired level of acceleration, $\eta$ (from above) sets $\alpha_\mathrm{boost}=0$ when the desired speed is achieved, and $1-s_\beta(\q)$ sets $\alpha_\mathrm{boost}=0$ when the system is within the region of higher damping. The normalization by $\|\qd\|+\epsilon$ ensures that $\alpha_\mathrm{boost}$ is directly applied along $\hat{\qd}$ with a very small positive value for $\epsilon$ to ensure numerical stability. This overall design injects more energy into the system when $-\alpha_\mathrm{boost} < \alpha_\mathrm{ex}^\eta - \alpha_{\Lag_e}$. Since this injection occurs for finite time, the total system energy is still bounded. The additional switches ensure that the system is still subject to positive damping, guaranteeing convergence.

\subsection{Designing fabric components}
\label{apx:DesigningHD2Terms}

This section reviews a number of techniques for designing HD2 functions and maps for the design of Finsler energies and HD2 geometries referenced in Subsection~\ref{sec:ToolsForFabricDesign} on tools for fabric design.

The design of fabric components largely involves the construction of a geometric policy expressing a desired component behavior and a Finsler energy whose metric will act as that policy's priority metric. Both of these parts are defined by HD2 terms, so the question of design largely reduces to the question of how to design HD2 functions and maps.

As a reminder, a Homogeneous of Degree $k$  (HD$k$) function is a function $f(\x)$ for which $f(\alpha \x) = \alpha^k f(\x)$. In our case, our functions are typically HD0 through HD2, specifically in velocity. For instance, if $\pi(\x,\xd)$ is a geometric policy, the {\em geometric} modifier means it is HD2 in velocity, so $\pi(\x,\alpha\xd) = \alpha^2 \pi(\x,\xd)$, and usually we assuming this holds just for $\alpha\geq 0$. It places no restrictions on how the function should scale with $\x$.

Often we design HD2 terms by multiplying an HD0 term by an HD2 scaling factor. That HD0 term might be entirely independent of velocity, or it may be dependent on the direction of the velocity, but not its norm (speed). Note that in the 1D case, the velocity direction is a switch, the sign of the scalar one-dimensional velocity, switching from 1 to -1 as velocity passes from positive to negative. This discrete switch is valid with our theory (systems have to come to a rest (zero velocity) before experiencing the switch) and is used commonly in our designs, especially for one-dimensional barrier geometries and energies. 

In higher dimensions, switches depending on the norm of a velocity may not result in positive definite mass matrices. The presentation in \cite{ratliff2021geometry} allows that case by relaxing the third requirement of Definition~\ref{def:FinslerStructure} to require only invertibility, which is internally consistent with the geometric theory. But here, as a generalization of classical mechanics, we want mass matrices to remain positive definite, so building such switching terms in 1D spaces (where is naturally positive definite) and pulling them back to higher-dimensional spaces becomes an important technique for Finsler energy design.

A common pattern is to start with a scalar function $\psi(\x)$ (often similar in structure and form to standard potential functions), and either use it directly as a position only base function, or compute a base policy as $\pi_0(\x) = -\partial_\x \psi$. In this discussion we will use the base policy as an example and construct a geometric policy. This base policy is HD0 simply because it doesn't depend on velocity at all. It's often useful to depend on velocity direction $\wh{\xd} = \xd/\|\xd\|$, though. Since $\wh{\xd}$ is already independent of speed $s = \|\xd\|$, any function of velocity direction is automatically HD0 as well. That gives $\pi_0^\sigma(\x,\xd) = \sigma(\wh{\xd})\pi_0(\x)$ as a new HD0 base. Finally, any Finsler energy $\Lag_e(\x,\xd) = \frac{1}{2}\Lag_g^2(\x,\xd)$ is HD2 by definition, so scaling $\pi_0^\sigma$ by $\Lag_e$ gives us an HD2 geometry
\begin{align}
    \pi(\x,\xd) = \Lag_e(\x,\xd)\,\pi_0^\sigma(\x,\xd).
\end{align}
For instance, we might choose 
\begin{align}
    \psi(\x) &= \frac{1}{2}\|\x_0 - \x\|^2
\end{align}
as a quadratic attractor to a target $\x_0$. The negative gradient of the potential defines $\pi_0(\x) = -\partial\psi = \x_0 - \x$. Then with a slight abuse of notation, we might switch on the velocity direction using a scalar $\sigma$ of the form $\sigma(s) = 1/(1+\exp(-\beta s))$ for $\beta \in\R_+$ with $s = \vv^\tr\wh{\xd}$ for some constant vector $\vv$. The resulting HD0 policy becomes
\begin{align}
    \pi_0^\sigma(\x,\xd) 
    &= \sigma(\vv^\tr\wh{\xd}) \pi_0(\x) \\\nonumber
    &= \frac{\x_0 - \x}{1+\exp\big(-\beta \vv^\tr\wh{\xd}\big)}.
\end{align}
Finally, we can choose to scale that using a Euclidean Finsler energy $\Lag_e = \frac{1}{2}\|\xd\|^2$ to get a final geometry of the form
\begin{align}
    \pi(\x,\xd) = \frac{\frac{1}{2}\|\xd\|^2\big(\x_0 - \x\big)}{1+\exp\big(-\beta \vv^\tr\wh{\xd}\big)}
\end{align}

Note that another way to design an HD2 geometry is by designing a Finsler geometry derived from a Finsler energy. We can use one Finsler energy to derive the geometry, and a second Finsler energy to define the priority metric. Energization would then bend the geometry associated with the second (priority) energy to align with the first energy's geometry.

A common pattern for designing Finsler energies is to start with a Riemannian (kinetic) energy in 1D of the form $\mathcal{K}(x,\dot{x}) = \frac{1}{2}g(x)\dot{x}^2$ where $g(x)$ is everywhere positive and dependent on position only. We then scale it by a scaling function that depends on velocity directionality. Since we are in 1D, that normalized velocity becomes a switch on the sign of the velocity, so we can design this scaling function in two parts
\begin{align}
    \sigma(\dot{x}) = \left\{
    \begin{array}{cc}
        \sigma_-(\dot{x}) & \mbox{for $\dot{x} \leq 0$}\\
        \sigma_+(\dot{x}) & \mbox{for $\dot{x} > 0$}
    \end{array}
    \right..
\end{align}
The energy then becomes $\Lag_e(x,\dot{x}) = \frac{1}{2}\wt{g}(x,\dot{x})\dot{x}^2$ where $\wt{g}(x,\dot{x}) = \sigma(\dot{x})g(x)$, which is effectively two different metrics depending on directionality of the velocity.

Note that in the 1D case, the derived $m(x,\dot{x}) = \partial^2_{\dot{x}\dot{x}}\wt{g}$ must have the property that 
\begin{align}
\Lag_e(x,\dot{x}) = \frac{1}{2}\wt{g}(x,\dot{x})\dot{x}^2 = \frac{1}{2}m(x,\dot{x})\dot{x}^2,
\end{align}
which is only true globally when $m = \wt{g}$. This in general is not necessarily true in higher dimensions.

Finally, we describe a common pattern in designing potential functions. Potential functions define forces that push against the fabric system's mass. The mass defines the fabric priority; the stronger the mass the higher the priority the fabric takes. Thought another way, these masses define the fabric system's inertia; a given force has less effect pushing a sizable mass (high priority) than pushing a small mass (low priority). It is often important that a potential function adjust effectively to the fabric's priority. However, it can be challenging to design potential forces directly to scale in an intuitive way. 

For that reason, it's often easier to design a potential function by designing its gradient in terms of a priority metric of its own and a desired {\em acceleration}. Specifically, if $\wt{\psi}(\x)$ is a function (similar to the potential function in form) defining an acceleration policy $\pi(\x) = -\partial\wt{\psi}$, and if $\M$ is its associated priority, the resulting potential function $\psi(\x)$ would be one for which $-\partial\psi = -\M\,\partial\wt{\psi}$ (force equals mass times acceleration) if such a potential function exists. Given an expression for a vector field, it's easy to check whether it is a gradient of a potential by taking its Jacobian and verifying the Jacobian is symmetric (it's a well-known result in multivariate calculus that a vector field is the gradient of a potential if and only if its Jacobian is everywhere symmetric). 

One-dimensional vector fields are always symmetric so we can always design one-dimensional potentials using an acceleration policy and an associated priority mass. For higher-dimensional potentials, one can use the results discussed in \cite{cheng2018rmpflow} (Appendix D.4) that show that if the acceleration policy is radially symmetric and the metric is compatibly symmetric along the same radial lines, the resulting vector field defined by scaling the acceleration policy by the metric has symmetric Jacobian and is therefore the gradient of some potential. 

When designing potential functions in this way, it is necessary only to show that the potential exists. Knowing that, we can implement the system using only its gradient given by the metric-scaled acceleration policy. The resulting system then fits the theory and we have convergence and stability guarantees.

\section{Additional Experiments: Equality Constraints}

This supplementary experiment is a specific realization of constrained fabrics. In principle, constrained fabrics can be realized in many different ways, e.g., by explicitly calculating the constraint Lagrange multiplier and adding the resulting constraint force to the unconstrained fabric as discussed in Appendix \ref{apx:LemmaConstrainedGeometricFabrics}. This method would additionally need to employ Baumgarte stabilization \cite{flores2011parametric} to combat numerical drift. Other approaches may be taken as well such as solving a nonlinear program subject to a discretized momentum-impulse form of the fabric and additional equality constraints as done by Stewart and Trinkle \cite{stewart2000implicit}. There are also many other different discretization and nonlinear program schemes that could be created to solve constrained fabrics. In this particular case, we create the following discretization,

\begin{align}
    \q &= \q_k \\
    \qd &= \frac{\q_{k+1} - \q_{k-1}}{2 \Delta t} \\
    \qdd &= \frac{\q_{k+1} - 2 \q_k + \q_{k-1}}{\Delta t^2}
\end{align}
where $k$ indicates the discrete time index and $\q_{k-1}$ and $\q_k$ are known. We then minimize the following cost function,

\begin{align}
    L(\q, \qd) = \frac{1}{2} (\qdd_d - \qdd)^\tr \M(\q, \qd) (\qdd_d - \qdd) + \frac{1}{2} \lambda \C(\q_{k+1})^2
\end{align}
where $\qdd_d$ is the desired unconstrained acceleration of the fabric calculated as $\qdd_d = -\M(\q, \qd)^{-1} \f(\q, \qd)$, $\C(\q_{k+1})$ is a scalar constraint function of interest, and $\lambda$ is a weight. The first term solves for system acceleration that naturally follows the fabric while the second term is a penalty on the constraint, where $\lambda$ can be set to a large value to enforce the constraint. Altogether, this formulation implicitly solves for a new acceleration that adds a constraint force on the unconstrained fabric theoretically realizing (\ref{eqn:ConstrainedGeometricFabrics}).

We implemented the above for the geometric fabric designed in Section \ref{sec:SystemExperiments} subject to the constraint that the end-effector must lie on a plane. We repeatedly solved the program with $\Delta t = 0.04 s$ and a Gauss-Newton optimizer with four iterations per time step to forward integrate the system. The result is shown in Fig. \ref{fig:constrained_fabric}. The robot can maintain the constraint at the micrometer level and the end-effector can settle to within 1 mm of the closest point on the plane to the target.

\begin{figure}[!t]
  \centering
  \includegraphics[width=.8\linewidth]{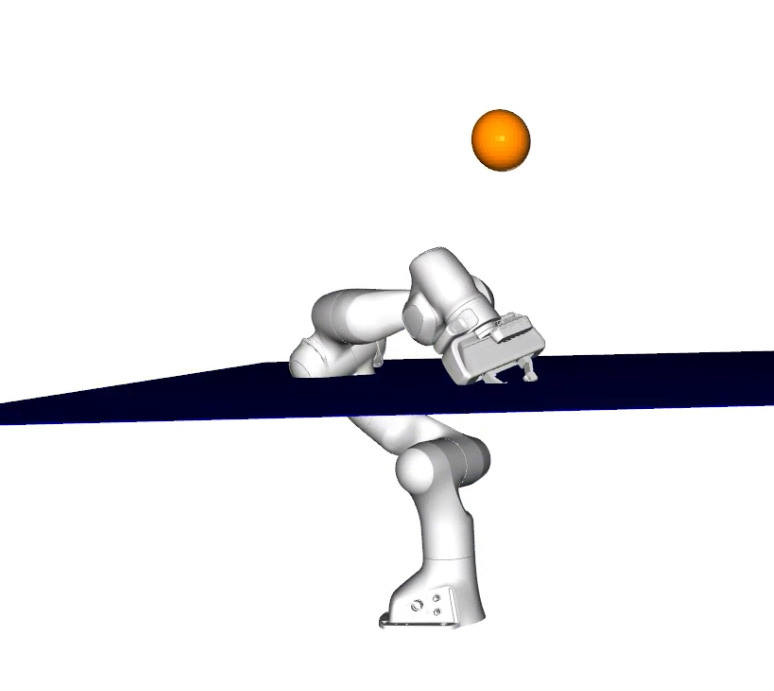}
  \caption{Constrained geometric fabric moving the robot end-effector to orange target points while the end-effector is constrained to the blue plane.}
  \label{fig:constrained_fabric}
\end{figure}

\section{End-effector Pose Control}
There are many different ways geometric fabrics can be used to control a robot, including end-effector pose control. This is a straight-forward extension to design outlined in Section \ref{sec:SystemExperiments} that attracted a point on the end-effector towards a target point. In this case, we calculate three additional task spaces that map to each axis of a orthonormal coordinate system placed at the end-effector. In each of these axis spaces, we place the very same attractor components discussed in Section \ref{subsubsec:attraction}. With this extension to the original fabric, we can exhibit full pose control over the end-effector.

\section{Additional experiments: Cubby navigation heuristics}
An important use of geometric fabrics is to heuristically bias the system without affecting convergence to local minima of the potential. This feature is especially useful for designing global navigation heuristics. A geometric fabric system as described in Subsection~\ref{sec:SystemExperiments} is subject to local minima since the target attractor is greedy. In many environments, though, simple heuristics can coarsely shape the behavior en route so that these local avoidance behaviors are sufficient for global navigation as well (similar to the heuristics described in \cite{ratliff2018rmps} Section V-D). 

Section XI of \cite{geometricFabricsForMotionArXiv2020}  describes such an experimental setup in detail. In those experiments, geometric fabrics are used to navigate between a wall of cubbies and a set of floor boxes in a standard pick and place industrial setting. The heuristic geometries encoded simple geometric fields that push the system away from the interior of any cubby and box except for the one it wanted to enter into. The final system successfully navigated between any pair of cubby and surrounding boxes even with perturbations to the environment. 

The simplicity and success of these heuristic terms suggest that learning such heuristics conditioned on environmental features over the top of an existing obstacle navigation fabric system would be straightforward, especially given automatic differentiation tools such as \cite{LiRmp2RobotLearning2021} and leveraging imitation learning methods such as \cite{xie2021imitation}. Fabrics constitute a compact encoding of behavioral components that span multiple tasks which could make it a powerful medium for learning of strongly generalizing policies. It is an area of future work to explore these learning applications in detail, especially in the context of meta learning shared nominal fabric layers. 

\end{appendices}

\end{document}